\begin{document}

\pagenumbering{gobble}

\title{Distributed State Estimation Using Intermittently Connected Robot Networks}

\author{Reza~Khodayi-mehr, Yiannis~Kantaros, and~Michael~M.~Zavlanos%
\thanks{This work is supported in part by the ONR under grant $\#N000141812374$. Reza Khodayi-mehr, Yiannis Kantaros, and Michael M. Zavlanos are with the Department of Mechanical Engineering and Materials Science, Duke University, Durham, NC 27708, USA, {\tt\footnotesize \{reza.khodayi.mehr, yiannis.kantaros, michael.zavlanos\}@duke.edu}.}
}

\maketitle
  
\begin{abstract}
This paper considers the problem of distributed state estimation using multi-robot systems. The robots have limited communication capabilities and, therefore, communicate their measurements intermittently only when they are physically close to each other. To decrease the distance that the robots need to travel only to communicate, we divide them into small teams that can communicate at different locations to share information and update their beliefs. Then, we propose a new distributed scheme that combines (i) communication schedules that ensure that the network is intermittently connected, and (ii) sampling-based motion planning for the robots in every team with the objective to collect optimal measurements and decide a location for those robots to communicate. To the best of our knowledge, this is the first distributed state estimation framework that relaxes all network connectivity assumptions, and controls intermittent communication events so that the estimation uncertainty is minimized. We present simulation results that demonstrate significant improvement in estimation accuracy compared to methods that maintain an end-to-end connected network for all time.
\end{abstract}
\begin{IEEEkeywords} 
Distributed state estimation, intermittent connectivity, sampling-based planning, multi-robot networks.
\end{IEEEkeywords}
\IEEEpeerreviewmaketitle
   
\section{Introduction} \label{sec:Intro}

Distributed State Estimation (DSE) using mobile robot networks has a number of important applications, including robot localization \cite{PACML2000FBKT, DML2002RB}, SLAM \cite{SLAM2006DB, nerurkar2014c, DAIA2015ALDP, leung2012decentralized}, coverage \cite{ISDSCMSN2012JO}, target localization \cite{vander2015algorithms, OPPRAATL2015FMZ, freundlich2017distributed} and tracking \cite{OSPMCTT2006MB, hollinger2009efficient, DMCSDTT2006CBM, OACTTPG2009DSH, DTMSNIM2007O, huang2015bank} among others.
In these applications, the robots are equipped with sensing devices and collect information in order to minimize the uncertainty of the state. Successful accomplishment of this task critically depends on the ability of the robots to communicate and exchange information with each other. Existing literature on DSE  often assumes
that the communication network remains connected for all time or over time. All-time connectivity imposes proximity constraints on the robots so that exploring large-scale environments becomes very inefficient. On the other hand, approaches that allow connectivity to be lost always assume that it will eventually be regained, without providing any theoretical guarantees for this. In this work we propose a distributed control framework for state estimation tasks that allows the robots to temporarily disconnect in order to optimally collect information in their environment but also ensures  that this information can be shared intermittently with all other robots through appropriate communication events in which the robots participate.

Specifically, we consider networks of mobile robots that are tasked with estimating a collection of time varying hidden states. We assume that the robots have limited communication capabilities so that they communicate their measurements, only when they are sufficiently close to each other. To minimize the distance that the robots need to travel only to communicate, as in our previous work \cite{kantaros2017temporal}, we partition the robot network into small teams of robots so that the graph of teams is connected, where we define edges between teams that have robots in common. Then, we propose  a correct-by-construction, distributed, discrete controller that designs sequences of communication events for the robots in every team so that the overall robot network is intermittently connected infinitely often. In \cite{kantaros2017temporal}, we require that communication events take place when the robots in every team meet at a common location in space, selected from a finite set of available locations. Instead, here we select the location of the communication events in continuous space and depending on the current status of the estimation task.
During every communication event, the robots update their beliefs using an estimation filter and decide on the paths they will follow until they meet again as a team. The connected subnetworks associated with communication events and the paths that the robots in every team follow until they communicate are determined by an online, distributed, sampling-based motion planner that takes into account the robots' objective to gather information in their environment. To the best of our knowledge, this is the first DSE framework with \textit{intermittent communication control}. We present simulation simulation results that show significant improvement in estimation accuracy compared to methods that maintain connectivity for all time.


\subsection{Distributed State Estimation}
Multi-robot DSE has been thoroughly addressed in the literature. 
In much of this literature \cite{DML2002RB, nerurkar2014c, DAIA2015ALDP, ISDSCMSN2012JO, vander2015algorithms, OPPRAATL2015FMZ,freundlich2017distributed, freundlich2018distributed, OSPMCTT2006MB, DMCSDTT2006CBM, DTMSNIM2007O, DKFSN2007O}, the posterior distribution of the state is assumed Gaussian so that Kalman Filter (KF) equations can be employed to obtain it in closed-form.
In \cite{DKFSN2007O, IWC2012KFR, DTMSNIM2007O, ISDSCMSN2012JO, DRSSIA2012JASR}, consensus algorithms are integrated with the KF equations to allow the robots to agree on their local estimates of the state.
Under more general assumptions, such closed-form expressions do not exist and, instead,  Bayesian frameworks need to be used for state estimation \cite{PACML2000FBKT, SLAM2006DB, DDFCASN2004MD, DDF2016CA}.

 None of these papers consider communication constraints in the DSE problem. Such constraints can be in the form of limited communication ranges and rates, as well as delays or loss of data. The work in\cite{DADDSN2001ND} generalizes the KF equations to account for  delayed and out-of-order data in sensor networks. Following a different point of view, the authors in \cite{KFIO2004SSFPJS} show that there is a critical lower bound on the communication rate in the network below which the estimation error becomes asymptotically unbounded.
A localization method for networks that are not guaranteed to be fully connected is presented in \cite{leung2012decentralized}.
Common in all these works is that the communication network used to share the information is end-to-end connected for all time \cite{DRSSIA2012JASR} or intermittently connected \cite{leung2012decentralized}. To the contrary, here, we lift all connectivity assumptions and instead control the communication network so that it is intermittently connected infinitely often. 

An important aspect of the multi-robot DSE problem is the design of informative paths for the robots that allow them to explore the environment and collect measurements.
Typically, informative path planning relies on defining appropriate optimality indices on the posterior distribution of the state. For Gaussian distributions, such optimality indices are scalar functions of the posterior covariance, e.g., trace \cite{DTMSNIM2007O, ISDSCMSN2012JO}, determinant, or maximum eigenvalue \cite{freundlich2018distributed}. Entropy \cite{DAIA2015ALDP} and mutual information \cite{DRSSIA2012JASR} are some other common choices.
Distributed planning methods for DSE can be obtained depending on the filter that is used for state estimation and the optimality index. For example, the authors in \cite{SDCMRIGT2006MD} propose a scalable control scheme for information gathering when the optimal control objective is partially separable. Along the same lines, a distributed framework using a Bayesian filter is presented in \cite{DDFCASN2004MD}.
%
In \cite{DTMSNIM2007O, ISDSCMSN2012JO}, where the consensus KF is utilized for target tracking, the gradient of an appropriate potential function is used for planning that results in flocking of mobile robots around the targets while avoiding collision. Similarly, in \cite{DRSSIA2012JASR}, the agents follow the gradient of the expected mutual information to estimate the state of an environment.

In this work, we allow the robots to temporarily disconnect and accomplish their tasks free of communication constraints. Then, the path planning problem should provide both a sequence of informative measurement locations for DSE and locations of communication events used to share information among all robots.
 When the process that is estimated is time varying, delays during communication events should also be minimized to limit the time that robots wait without collecting any measurements. In order to address this challenging planning problem, we utilize sampling-based algorithms.
Such algorithms have been used in \cite{lan2016rapidly,hollinger2013sampling} for informative path planning. Specifically, a variation of the $\text{RRT}^{*}$ algorithm \cite{kuffner2000rrt,karaman2011sampling} is proposed in \cite{lan2016rapidly} that can design offline periodic trajectories to estimate the state of a dynamic field. The works in \cite{hollinger2013sampling,hollinger2014sampling} build upon the RRG algorithm \cite{karaman2011sampling} to design motion plans that maximize an information theoretic metric subject to budget constraints associated with the traveled distance. Nevertheless, this approach cannot be used when the length of the designed path, i.e., the budget, is not known \textit{a priori}, as in our work. Common in \cite{hollinger2013sampling,hollinger2014sampling} is that they address single-agent motion planning problems. Applying these algorithms to multi-agent problems requires exploration of the joint space of all robots, which makes them computationally intractable for large-scale networks. By dividing the group of robots into small teams and using sampling-based methods to jointly plan the motion only of the robots in each team, our method scales to much larger problems so that it can also be implemented in real-time.

\subsection{Network Connectivity}
Network connectivity has been widely studied recently as it is necessary for distributed control and estimation. Approaches to preserve network connectivity for all time typically rely on controlling the Fiedler value of the underlying graph either in a centralized \cite{OACTTPG2009DSH,kim2006maximizing,zavlanos2007potential} or distributed \cite{ji2007distributed,Zavlanos_IEEETRO08,yang2010decentralized,montijano2011adaptive,franceschelli2013decentralized,sabattini2013decentralized} fashion. A recent survey on graph theoretic methods for connectivity control can be found in \cite{Zavlanos_IEEE11}. However, due to the uncertainty in the wireless channel, it is often impossible to ensure all-time connectivity in practice. Moreover, all-time connectivity constraints may prevent the robots from moving freely in their environment to fulfill their tasks, and instead favor motions that maintain a reliable communication network.
Motivated by this fact, intermittent communication frameworks have recently been proposed \cite{FLSPPDF2016CF,hollinger2010multi,zavlanos2010synchronous,kantaros2016distributedInterm,kantaros2016simultaneous,kantaros2017temporal,kantaros2018distributed,guo2018gathering}. 
Specifically, \cite{FLSPPDF2016CF} addresses a data-ferrying problem by designing paths for a single robot that connect two static sensor nodes while optimizing motion and communication variables. More general intermittent connectivity frameworks for multi-robot systems are proposed in \cite{hollinger2010multi,zavlanos2010synchronous,kantaros2016distributedInterm,kantaros2016simultaneous,kantaros2017temporal,kantaros2018distributed,guo2018gathering}.
The work in \cite{hollinger2010multi} proposes a receding horizon framework for periodic connectivity that ensures recovery of connectivity within a given time horizon.
When connectivity is recovered in \cite{hollinger2010multi} the whole network needs to be connected. To the contrary, \cite{zavlanos2010synchronous,kantaros2016distributedInterm,kantaros2016simultaneous,kantaros2017temporal,kantaros2018distributed,guo2018gathering} do not require that the communication network is ever connected at once, but they ensure connectivity over time, infinitely often, as in the method proposed here. 
The key idea in \cite{zavlanos2010synchronous,kantaros2016distributedInterm,kantaros2016simultaneous,kantaros2017temporal,kantaros2018distributed} is to divide the robots into smaller teams and require that communication events take place when the robots in every team meet at a common location in space.
 While in disconnect mode, the robots can accomplish other tasks free of communication constraints. Assuming that the graph of teams, whose edges connect teams that have robots in common, is a connected bipartite graph, \cite{zavlanos2010synchronous} proposes a distributed control scheme that achieves periodic communication events, synchronously, at the meeting locations. Arbitrary connected team graphs are considered in \cite{kantaros2016distributedInterm,kantaros2016simultaneous,kantaros2017temporal,kantaros2018distributed} where discrete plans (schedules) are synthesized as sequences of communication events at the meeting locations. Integration of the intermittent communication framework \cite{kantaros2016simultaneous} with task planning for mobile robots that have to accomplish high-level complex tasks captured by Linear Temporal Logic (LTL) formulas  or arbitrary time-critical tasks is presented in \cite{kantaros2017temporal,kantaros2018distributed}, respectively.  
Compared to \cite{zavlanos2010synchronous,kantaros2016distributedInterm,kantaros2016simultaneous,kantaros2017temporal,kantaros2018distributed}, here we do not require that communication events take place when the robots in every team meet at a common location in space.
Instead, we require that communication events take place when the robots in every team form a connected subnetwork somewhere in the continuous space, which introduces challenging connectivity constraints in the proposed planning problem for every team. Distributed intermittent communication controllers for multi-robot data-gathering tasks is presented in \cite{guo2018gathering} that allows information to flow  intermittently only to the root/user. To the contrary, in our  proposed method, information can flow intermittently between any pair of robots and possibly a user in a multi-hop fashion.  

\subsection{Contributions}


The contributions of this paper can be summarized as follows. To the best of our knowledge, this is the first framework for DSE with \textit{intermittent communication control}. We show that the proposed sampling-based solution to the informative path planning problem is probabilistically complete and guarantees that the robots in every team form a connected subnetwork at the same time in order to communicate, minimizing in this way the time that the robots remain idle. The proposed framework scales with the size of the robot teams and not with the size of the network. As a result, our approach can be applied to large-scale networks. Moreover, we characterize the delay in propagating information across the network as a function of the structure of the robot teams. Finally, we show through comparative simulation studies that DSE using intermittent communication outperforms methods that maintain network connectivity for all time. We also present simulation studies that show the effect of the structure of the teams on the communication delays and on the estimation performance of our algorithm.

%
%

The rest of the paper is organized as follows. In Section \ref{sec:PF} the DSE problem using intermittently connected robot networks is presented. The proposed distributed control framework is presented in Section \ref{sec:HC} and its efficiency is verified through simulation studies in Section \ref{sec:Sim}. Conclusive remarks are presented in Section \ref{sec:Concl}. 

\section{Problem Definition} \label{sec:PF}


%
Let $\bbx(t)\in \reals^n$ denote a state variable that evolves according to the following nonlinear dynamics
\begin{equation} \label{eq:dynamics}
\bbx(t+1) =\bbf(\bbx(t), \bbu(t), \bbw(t) ),
\end{equation}
where  $\bbu(t) \in \reals^{d_u}$ and $\bbw(t) \in\reals^{d_w}$ denote the control input and the process noise at discrete time $t$. We assume that the process noise $\bbw(t)$ is normally distributed as $\bbw(t) \sim \ccalN(\bb0, \bbQ(t) )$, where $\bbQ(t) \in \mbS^{d_w}_{++}$ and $\mbS^{d_w}_{++} \subset \reals^{d_w \times d_w}$ is the set of symmetric positive-definite matrices. 

Consider also $N$ mobile robots tasked with collaboratively estimating the state $\bbx(t)$ of the dynamic process in \eqref{eq:dynamics}. Let $\Omega \subset \reals^d$ be the domain where the robots live and let $O\subset\Omega$ denote the set of obstacles.  We assume that the robots collect measurements of the state $\bbx(t)$ inside the obstacle-free subset $\Omega_{\text{free}}=\Omega \backslash O$ of this domain according to the following sensing model
\begin{equation} \label{eq:measModel}
\bby(t, \bbq) = \bbh(\bbx(t), \bbq, \bbv(t)),
\end{equation}
where $\bby \in \reals^m$ is the measurement vector at discrete time $t$ taken at location $\bbq \in \Omega_{\text{free}}$ by one robot sensor and $\bbv(t) \sim \ccalN(\bb0, \bbR(t) )$ is the white measurement noise with covariance $\bbR(t) \in \mbS^{d_v}_{++}$.

Moreover, we assume that the robots have limited communication capabilities and, therefore, they can communicate their measurements only when they are sufficiently close to each other. Specifically, every robot is able to communicate with another robot if it lies within a communication range $R \ll \texttt{diam}(\Omega)$, where $\texttt{diam}(\Omega)$ is the diameter of the domain $\Omega$. Without loss of generality we assume that the communication range is the same for all robots.

Since the communication range is much smaller than the size of the domain in which the robots operate, requiring that all robots are connected either for all time or intermittently can significantly interfere with the tasks that they need to accomplish, especially if these require them to travel to possibly remote locations in the domain. Therefore, we do not require that the robots ever form a connected network at once and, instead, we divide the group of robots into $M\geq 1$ teams, denoted by $\ccalT_i$, where $i\in\{1,2,\dots,M\}$, and require that every robot belongs to exactly two teams.\footnote{In this work, for simplicity, we require that every robot belongs to two teams. However, more complex team membership graphs $\ccalG_\ccalT$ can be considered, as in \cite{kantaros2017temporal}, where robots can belong to any number of teams.} 


Given the teams $\ccalT_i$, we also define the graph of teams $\ccalG_{\ccalT}=\{\ccalV_{\ccalT},\ccalE_{\ccalT}\}$ whose set of nodes $\ccalV_{\ccalT}$ is indexed by the teams $\ccalT_i$, i.e., $\ccalV_{\ccalT}=\{1,2,\dots,M\}$ and set of edges $\ccalE_{\ccalT}$ consists of links between nodes $i$ and $j$ if $\ccalT_i\cap\ccalT_j\neq\varnothing$, i.e., there exist a robot $r_{ij}$ that travels between teams $\ccalT_i$ and $\ccalT_j$.

\begin{figure}[t]
  \centering
  \subfigure[]{
    \label{fig:GT1}
     \includegraphics[width=0.46\linewidth]{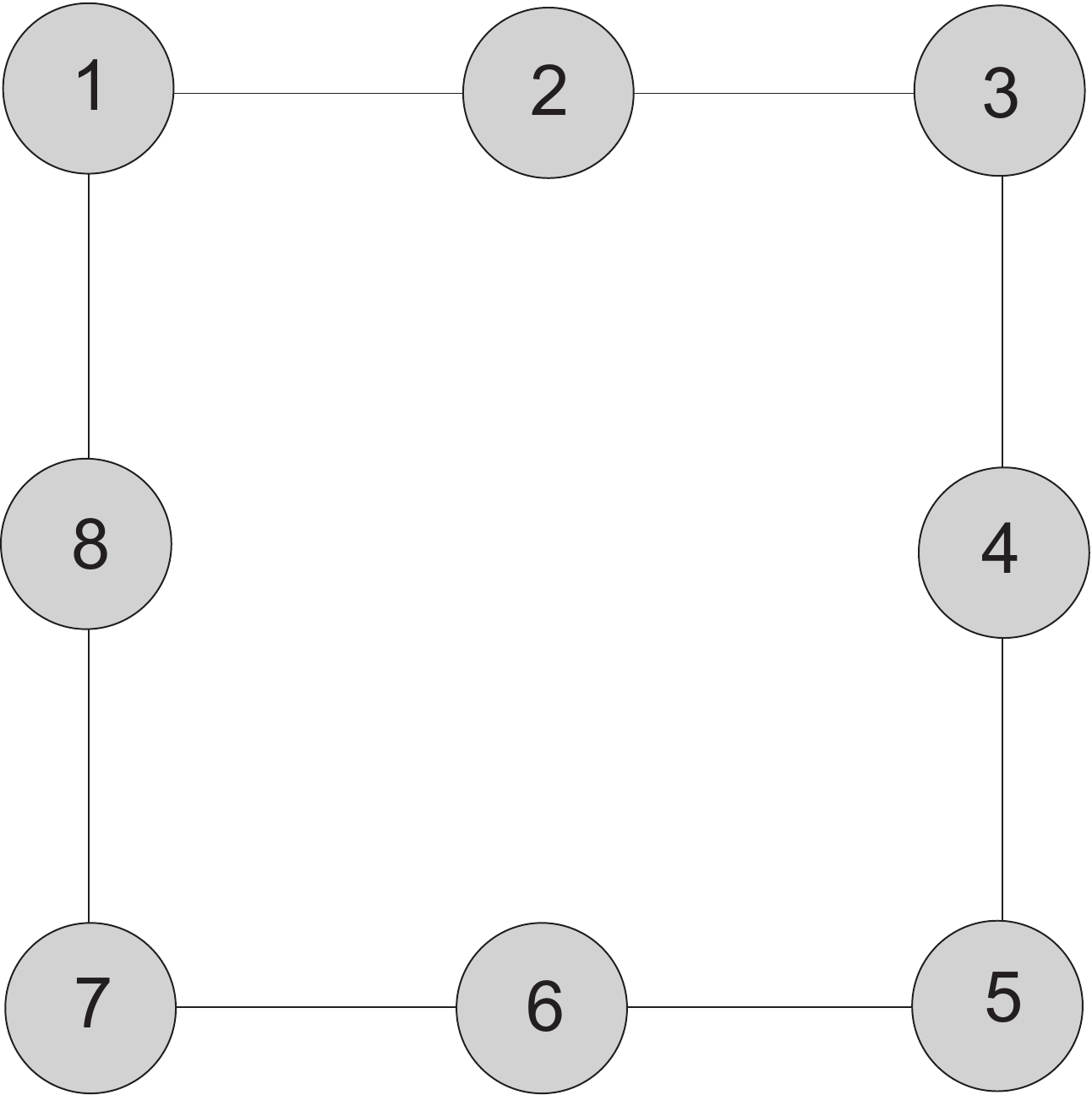}}
     \subfigure[]{
       \label{fig:GT2}
       \includegraphics[width=0.46\linewidth]{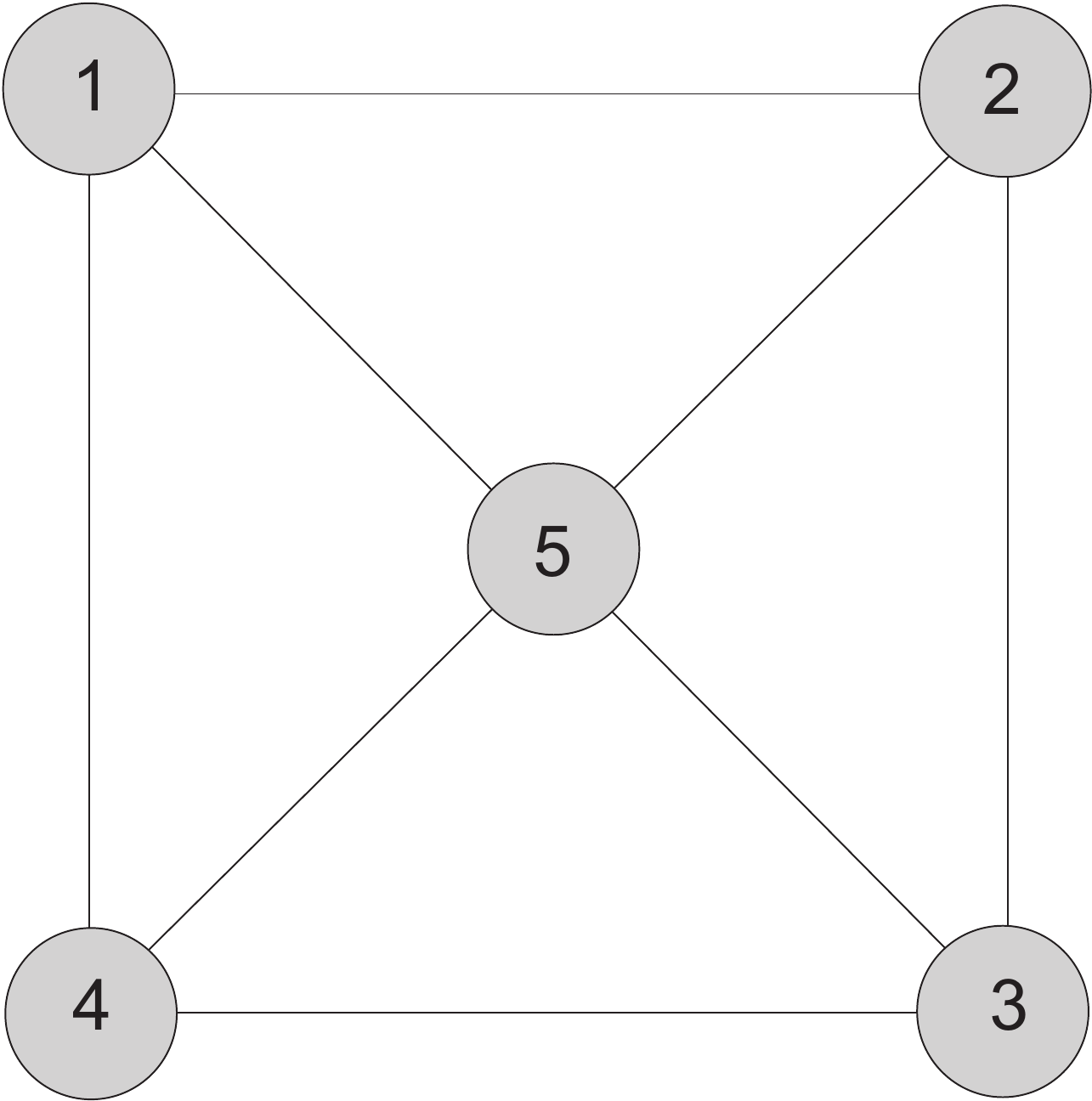}}
  \caption{Consider a network of N=8 robots. Figure \ref{fig:GT1} depicts the graph $\ccalG_{\ccalT}$ when the robots are divided into $M=8$ teams as: $\ccalT_1=\{r_{12},r_{18}\}$, $\ccalT_2=\{r_{12},r_{23}\}$, $\ccalT_3=\{r_{23}, r_{34}\}$, $\ccalT_4=\{r_{14},r_{45}\}$, $\ccalT_5=\{r_{56},r_{45}\}$, $\ccalT_6=\{r_{56},r_{67}\}$, $\ccalT_7=\{r_{67},r_{78}\}$, $\ccalT_8=\{r_{18},r_{78}\}$.  Figure \ref{fig:GT2} depicts the graph $\ccalG_{\ccalT}$ when the robots are divided into $M=5$ teams as: $\ccalT_1=\{r_{12}, r_{14}, r_{15} \}$, $\ccalT_2=\{r_{12}, r_{23}, r_{25} \}$, $\ccalT_3=\{r_{23}, r_{34}, r_{35} \}$ , $\ccalT_5=\{r_{15}. r_{25}, r_{35}, r_{45} \}$. }
  \label{fig:GT}
\end{figure}

We assume that the robots in every team $\ccalT_i$ communicate when they construct a connected network. Hereafter, we denote by $\ccalG_{\ccalT_i}=\{\ccalV_{\ccalT_i}, \ccalE_{\ccalT_i}(t)\}$ the communication graph constructed by robots in team $\ccalT_i$, where the set of nodes $\ccalV_{\ccalT_i}$ contains all robots in team $\ccalT_i$ and the set of edges $\ccalE_{\ccalT_i}(t)$ collects communication links that emerge between robots in $\ccalT_i$, whose pairwise distance is less than or equal to $R$. This way, a dynamic robot communication network is constructed as follows.

\begin{definition}[Communication Network $\ccalG_c(t)$]
The communication network among the robots is defined as a dynamic undirected graph $\ccalG_c(t)=\{\mathcal{V}_c,\mathcal{E}_c(t)\}$, where the set of nodes $\ccalV_c$ is indexed by the robots, i.e., $\ccalV_c=\{1,\dots,N\}$, and $\ccalE_c(t)\subseteq\ccalV_c\times\ccalV_c$ is the set of communication links that emerge among robots in every team $\ccalT_i$ when they form a connected graph $\ccalG_{\ccalT_i}$.
\end{definition}

To ensure that information is propagated among all robots in the network, we require that the communication graph $\ccalG_c(t)$ is \textit{connected over time infinitely often}, i.e., that all robots in every team $\ccalT_i$ form connected graphs $\ccalG_{\ccalT_i}$ infinitely often. For this, it is necessary to assume that the graph of teams $\ccalG_\ccalT$ is connected; see e.g., Figure \ref{fig:GT}. Specifically, if $\ccalG_\ccalT$ is connected, then information can be propagated intermittently across teams through robots that are common to these teams and, in this way, information can reach all robots in the network. 

We denote by $r_{ij}$ a robot that belongs to teams $\ccalT_i$ and $\ccalT_j$ and assume that it is governed by the following nonlinear dynamics
\begin{equation}\label{eq:rdynamics}
\bbp_{ij}(t+1)=\bbg(\bbp_{ij}(t),\bbu_{ij}(t)),
\end{equation}
where $\bbp_{ij}(t) \in \Omega_{\text{free}}$ stands for the position of robot $r_{ij}$ and $\bbu_{ij}(t) \in \reals^{d_r}$ stands for a control input. Without loss of generality, we assume that all robots have the same dynamics.

The problem that we address in this paper can be summarized as follows.
\begin{prob} \label{prob:intMon}
Given the dynamic process \eqref{eq:dynamics} and measurement model \eqref{eq:measModel}, and a network of $N\geq1$ robots divided into $M\geq1$ teams $\ccalT_i$, $i \in \set{1, \dots, M}$, such that $\ccalG_{\ccalT}$ is connected, determine paths $\bbp_{ij}(t) \in \Omega$ for all robots $r_{ij}$, so that
(i) the communication graph $\ccalG_c(t)$ is intermittently connected infinitely often, and
(ii) all robots collectively minimize estimation uncertainty of the state $\bbx(t)$ over time.
%
%
\end{prob}

Throughout the paper we make the following assumption.
\begin{assumption} [State Model]
The dynamics of the state \eqref{eq:dynamics}, the control vectors $\bbu(t)$ in \eqref{eq:dynamics}, the sensing model \eqref{eq:measModel}, and process and measurement noise covariances $\bbQ(t)$ and $\bbR(t)$ are known. Furthermore, the state dynamics \eqref{eq:dynamics} and sensing model \eqref{eq:measModel} are differentiable functions.
\end{assumption}
\section{Path Planning with Intermittent Communication} \label{sec:HC}
To solve Problem \ref{prob:intMon}, we propose a distributed control framework that concurrently plans robot trajectories that minimize a desired uncertainty metric, e.g., a scalar function of the covariance matrix \cite{freundlich2018distributed}, the Fisher information matrix \cite{meJ1}, or the entropy or mutual information of the posterior distribution of $\bbx(t)$ \cite{leahy2015distributed}, and schedules communication events during which the robots exchange their gathered information and update their beliefs. The schedules of communication events are determined by a correct-by-construction discrete controller that ensures that the communication network is intermittently connected; cf. Section \ref{sec:DP}. 
The connected subnetworks associated with these communications events and the paths the robots follow until they communicate are determined by an online sampling-based planner that takes into account the robots' objective to collect information; cf. Section \ref{sec:planning}.

\subsection{Distributed Intermittent Connectivity Control} \label{sec:DP}
In this section, we design infinite sequences of communication events (also called communication schedules) that ensure that robots in every team $\ccalT_i$ communicate intermittently and infinitely often, for all $i\in\{1,\dots,M\}$. Since every robot belongs to two teams, the objective in designing these schedules is that no teams that share common robots communicate at the same time, as this would require the shared robots to be present at more than one location at the same time. We call such schedules conflict-free. 

\begin{definition}[Schedule of communication events]\label{defn:sched}
The schedule of communication events of robot $r_{ij}$, denoted by $\texttt{sched}_{ij}$, is defined as an infinite repetition of the finite sequence 
\begin{align}\label{eq:si} 
 s_{ij}=&X,\dots,X,i,X,\dots,X,j,X,\dots,X,
\end{align} 
i.e., $\texttt{sched}_{ij}=s_{ij},s_{ij},\dots=s_{ij}^{\omega}$, where $\omega$ stands for the infinite repetition of $s_{ij}$.
\end{definition}

The detailed construction of such conflict-free schedules is omitted and can be found in \cite[Sec. V]{kantaros2017temporal}.  In Definition \ref{defn:sched}, $i$ and $j$ represent communication events for teams $\ccalT_i$ and $\ccalT_j$, respectively, and the discrete states $X$ indicate that there is no communication event for robot $r_{ij}$. The length of sequence $s_{ij}$ is $T=\max\left\{d_{i}\right\}_{i=1}^M+1$ for all robots $r_{ij}$, where $d_{i}$ denotes the degree of node $i\in\ccalV_{\ccalT}$. Also, we denote by $\texttt{sched}_{ij}(k_{ij})$ the $k_{ij}$-th entry in the sequence $\texttt{sched}_{ij}$, where $k_{ij}\in\mathbb{N}$. Hereafter, we call the indices $k_{ij}$ \textit{epochs}. The schedules in Definition \ref{defn:sched} ensure that the communication network is intermittently connected, i.e., all teams $\ccalT_i$ communicate infinitely often \cite{kantaros2017temporal}. 

Note that the schedules $\texttt{sched}_{ij}$ define the order in which robots $r_{ij}$ participate in communication events for the teams $\ccalT_i$ and $\ccalT_j$. Specifically, at an epoch $k_{ij}\in\mathbb{N}$, robot $r_{ij}$ needs to communicate with all robots that belong to either $\ccalT_i$ or $\ccalT_j$, if $\texttt{sched}_{ij}(k_{ij})=i$ or $\texttt{sched}_{ij}(k_{ij})=j$, respectively. If $\texttt{sched}_{ij}(k_{ij})=X$, then robot $r_{ij}$ does not need to participate in any communication event. By construction of the schedules $\texttt{sched}_{ij}$, it holds that the epochs when the communication events for a team $\ccalT_i$ will occur are the same for all $r_{ij}\in\ccalT_i$. Note also that due to the infinite repetition of the sequence $s_{ij}$, communication among robots in any team $\ccalT_i$ recurs every $T$ epochs. Specifically, denoting by $k_{\ccalT_i}^0$ the first epoch when communication happens within team $\ccalT_i$, communication within $\ccalT_i$ will also take place at epochs $\{k_{\ccalT_i}^0+nT\}_{n=0}^{\infty}$. 
For example, the schedule $\texttt{sched}_{ij}=[i,j,X]^{\omega}$ determines that robot $r_{ij}$ needs to communicate with robots in teams $\ccalT_i$ and $\ccalT_j$ at epochs $\{1+nT\}_{n=0}^{\infty}$ and $\{2+nT\}_{n=0}^{\infty}$, respectively. At epochs $\{3+nT\}_{n=0}^{\infty}$, robot $r_{ij}$ does not need to participate in any communication event. Finally, observe that the schedules $\texttt{sched}_{ij}$ do not determine the actual time instants or locations that these communication events should occur. These time instants and locations are determined by the planning problem; cf. Section \ref{sec:planning}.
%
     
Since the robots communicate intermittently, information is propagated across the network with a delay. In the following proposition, we show that this delay depends on the structure of the graph $\ccalG_{\ccalT}$ and on the period $T$ of the schedules $\texttt{sched}_{ij}$.

\begin{prop}\label{prop:kstar}
The worst case delay, measured in terms of elapsed epochs $k_{ij}$, with which information collected by robot $r_{ij}$ will propagate to every other robot in the network is $D_{\ccalG_\ccalT}=(T-1)L_{\ccalG_\ccalT}$, where $L_{\ccalG_\ccalT}$ is the longest shortest path in $\ccalG_{\ccalT}$ and $T$ is the period of the schedules $\texttt{sched}_{ij}$. Specifically, $L_{\ccalG_\ccalT}$ is defined as $L_{\ccalG_\ccalT}=\max_{i,e\in\{1,\dots,M\}}|\ell_{ie}|$, where $\ell_{ie}$ denotes the shortest path in $\ccalG_\ccalT$ that connects the teams $\ccalT_i$ and $\ccalT_e$, and $|\ell_{ie}|$ is the number of nodes in $\ell_{ie}$.
\end{prop}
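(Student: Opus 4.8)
The plan is to prove the bound by combining a single-hop delay estimate with a shortest-path argument on the team graph $\ccalG_\ccalT$, tracking when the information first reaches each \emph{team} rather than each robot; this suffices because a robot obtains the information as soon as one of its two teams communicates. First I would establish the key single-hop estimate: if the information is present at team $\ccalT_a$ at the epoch $\tau_a$ of one of its communication events, and $\ccalT_b$ is adjacent to $\ccalT_a$ in $\ccalG_\ccalT$ through the shared robot $r_{ab}$, then $\ccalT_b$ possesses the information no later than epoch $\tau_a+(T-1)$. This holds because $r_{ab}$ belongs to both teams, so it is present at the $\ccalT_a$-event and acquires the information there; moreover its schedule $\texttt{sched}_{ab}$ contains exactly one $\ccalT_a$-event and one $\ccalT_b$-event per period of length $T$, and since these occupy two distinct positions of the period, the first $\ccalT_b$-event following $\tau_a$ — whether later in the same period or in the next one — is separated from it by at most $T-1$ epochs, at which point every robot in $\ccalT_b$ obtains the information.

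Second, I would handle the initial injection of the information into the team graph. Using the same per-period counting, from the epoch $k_0$ at which $r_{ij}$ collects its measurement, the next communication event of $r_{ij}$ — for either $\ccalT_i$ or $\ccalT_j$ — occurs within $T-1$ epochs, so at least one of these two teams, call it $\ccalT_{v_1}$, holds the information by epoch $k_0+(T-1)$.

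Third, I would chain the single-hop estimate along a shortest path in $\ccalG_\ccalT$. For an arbitrary destination team $\ccalT_e$, take a shortest path $v_1,v_2,\dots,v_p=e$, which by definition of $L_{\ccalG_\ccalT}$ has $p\le L_{\ccalG_\ccalT}$ nodes and hence $p-1\le L_{\ccalG_\ccalT}-1$ edges. Applying the estimate successively to each edge — observing that each intermediate team acquires the information precisely \emph{at} one of its own communication events, which re-establishes the hypothesis needed for the next hop — yields that $\ccalT_e$ has the information by epoch $\tau_{v_1}+(p-1)(T-1)$. Combining with $\tau_{v_1}\le k_0+(T-1)$ bounds the total delay by $(T-1)+(L_{\ccalG_\ccalT}-1)(T-1)=(T-1)L_{\ccalG_\ccalT}=D_{\ccalG_\ccalT}$. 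Maximizing over $\ccalT_e$ and over the source robot gives the worst case, and a matching configuration in which a path realizing $L_{\ccalG_\ccalT}$ is traversed with every hop incurring exactly $T-1$ epochs shows the bound is attained.

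I expect the main obstacle to be the bookkeeping of the $(T-1)$ factors: the number of factors equals $L_{\ccalG_\ccalT}$ (nodes on the longest shortest path) rather than $L_{\ccalG_\ccalT}-1$ (edges), and getting this off-by-one right requires cleanly separating the one-time cost of injecting the information from $r_{ij}$ into the first team (one factor of $T-1$) from the $L_{\ccalG_\ccalT}-1$ inter-team hops along the path. A secondary subtlety is justifying the per-period $T-1$ bound uniformly for both the intra-period transition and the wraparound between consecutive periods, which rests on the fact that, by the periodicity of $\texttt{sched}_{ij}$, each team appears exactly once in every window of $T$ consecutive epochs.
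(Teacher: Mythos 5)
Your proposal is correct and follows essentially the same route as the paper's proof: one factor of $T-1$ for injecting the information from $r_{ij}$ into its first communicating team, plus $(L_{\ccalG_\ccalT}-1)(T-1)$ for relaying along a shortest path in $\ccalG_\ccalT$, summing to $(T-1)L_{\ccalG_\ccalT}$. The only differences are that you justify the per-hop $T-1$ bound explicitly via the periodicity of $\texttt{sched}_{ab}$ (which the paper asserts by construction) and you gesture at tightness of the bound, which the paper does not address.
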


\begin{proof}
Assume that robot $r_{ij}$ collects information at epoch $k_{ij}$. Without loss of generality, assume that the next communication event for robot $r_{ij}$ is with team $\ccalT_i$. By construction of the schedules,  at most $T-1$ epochs will elapse from epoch $k_{ij}$ until this communication event happens. Thus, the information collected by robot $r_{ij}$ at epoch $k_{ij}$ will be transmitted to all robots in team $\ccalT_i$ in $K_1\leq T-1$ epochs.
Next consider the number of epochs required for this information to be transmitted from team $\ccalT_i$ to any other team $\ccalT_e$. Since the graph $\ccalG_\ccalT$ is connected by assumption, there is at least one path that connects team $\ccalT_i$ to team $\ccalT_e$ and, therefore, information can be propagated from $\ccalT_i$ to $\ccalT_e$.
Let $\ell_{ie}$ denote the shortest path between teams $\ccalT_i$ and $\ccalT_e$. Then, this information will be transmitted from $\ccalT_i$ to $\ccalT_e$ through the path $\ell_{ie}$, within $(|\ell_{ie}|-1)(T-1)$ epochs, where $\abs{\ell_{ie} }$ is the length of the shortest path. Therefore, we get that $K_{ie}\leq(|\ell_{ie}|-1)(T-1)$. 
%
%
%
Finally, given an arbitrary team $\ccalT_i$, we get $K_{ie}\leq (L_{\ccalG_\ccalT}-1)(T-1)$, where $L_{\ccalG_\ccalT}$ stands for the longest shortest path in $\ccalG_\ccalT$, i.e., $L_{\ccalG_\ccalT}=\max_{i,e\in\{1,\dots,M\}}|\ell_{ie}|$ by definition. Therefore, the packet of information collected by robot $r_{ij}$ at epoch $k_{ij}$ will be transmitted to any other team $\ccalT_e$ and, consequently, to all robots within $D_{\ccalG_\ccalT} = K_1+K_{ie}\leq T-1 + (L_{\ccalG_\ccalT}-1)(T-1)=L_{\ccalG_\ccalT}(T-1)$ epochs.
\end{proof}


\begin{rem}[Discrete States $X$]\label{rem:X}
In the schedules $\texttt{sched}_{ij}$, defined in Definition \ref{defn:sched}, the states $X$ indicate that no communication events occur for robot $r_{ij}$ at the current epoch. These states are used to synchronize the communication events over the epochs $k_{ij}\in\mathbb{N}_+$, i.e., to ensure that the epochs $k_{ij}$ when communication happens for team $\ccalT_i$, $i\in\ccalM$, is the same for all robots $r_{ij}\in\ccalT_i$. Nevertheless, as shown in Theorem 7.8 in  \cite{kantaros2017temporal}, it is the order of the communication events in $\texttt{sched}_{ij}$ that is critical to ensure intermittent communication, and not the epochs or the time instants that they take place.
\end{rem}

\subsection{Informative Path Planning} \label{sec:planning}
%

The schedules $\texttt{sched}_{ij}$ developed in Section \ref{sec:DP} determine an abstract sequence of communication events that are not associated with any physical location in space. Similarly, the notion of an epoch indicates the place of a communication event within the sequence $\texttt{sched}_{ij}$ but is not associated with physical time. In this section, we embed the sequence of communication events $\texttt{sched}_{ij}$ over the epochs $k_{ij}$ into time $t$ and space $\Omega$. For this we design robot trajectories that not only allow the robots to obtain measurements of the state that  minimize estimation uncertainty, but also ensure that all robots in every team 
become connected and communicate with each other at the epochs specified by the schedules $\texttt{sched}_{ij}$. 

Let $k$ be an epoch when a communication event takes place for team $\ccalT_i$, i.e., $\texttt{sched}_{ij}(k)=i$, where to simplify notation we drop dependence of the epoch $k_{ij}$ on the robot $r_{ij}$. Moreover, assume that the path $\bbp_{ij}(t)$ is divided into segments indexed by the epochs and, let $\bbp_{ij}^k:[t_{0,ij}^{k}, t_{f,ij}^{k}  ] \rightarrow \Omega_{\text{free}}$ denote the $k$-th segment of path $\bbp_{ij}(t)$ of robot $r_{ij} \in\ccalT_i$ starting at the discrete time $t_{0,ij}^{k}$ and ending at $t_{f,ij}^{k} $. 
Communication within team $\ccalT_i$ during the $k$-th epoch takes place at the time instant $t_{f,\ccalT_i}^k = \max_{r_{ij}\in\ccalT_i} \set{ t_{f,ij}^{k} }$ when all robots $r_{ij}\in\ccalT_i$ arrive at the end locations $ \bbp_{ij}( t_{f,ij}^{k} ) $ of their paths $\bbp_{ij}^k$. The starting location $\bbp_{ij}(t_{0,ij}^k)$ of the path $\bbp_{ij}^k$ coincides with the location where the last communication event for robot $r_{ij}$ took place within team $\ccalT_j$; see also Figure \ref{fig:notations}. The epoch and corresponding time instant when communication within team $\ccalT_j$ took place are denoted by $\bbark$ and $t_{f,\ccalT_j}^{\bbark}$, respectively, where $k-T+1 \leq \bbark < k$ by periodicity of the communication schedules. Thus, robot $r_{ij}$ starts executing the path segment $\bbp_{ij}^k$ at the time instant $t_{0,ij}^k=t_{f,\ccalT_j}^\bbark$ and finishes its execution at $t_{f,ij}^{k} $.  
At time $t_{f,\ccalT_i}^k$, when all robots in $\ccalT_i$ communicate, they collectively design the path segments $\bbp_{ij}^{k+T}, \, \forall r_{ij} \in \ccalT_i$, that will result in a connected configuration for team $\ccalT_i$ at epoch $k+T$; see Figure \ref{fig:notations}.
In what follows, our goal is to design path segments $\bbp_{ij}^{k+T}$ that minimize estimation uncertainty and satisfy the following three constraints:

\begin{enumerate}[label=(\alph*)]
\item First, the paths $\bbp_{ij}^{k+T}$ do not intersect with the obstacles and respect the dynamics \eqref{eq:rdynamics}. 
\item Second, the end locations $\bbp_{ij}(t_{f,ij}^{k+T})$ for all robots $r_{ij}\in\ccalT_i$ correspond to a connected communication graph $\ccalG_{\ccalT_i}$ for team $\ccalT_i$.
\item Third, the end times $t_{f,ij}^{k+T}$ of the paths $\bbp_{ij}^{k+T}$ are the same and equal to $t_{f,\ccalT_i}^{k+T}$ for all robots $r_{ij}\in\ccalT_i$, so that there are no robots waiting for the arrival of other robots to communicate. Minimizing waiting times allows the robots to spend more time collecting measurements needed for estimation. 
\end{enumerate}
%

\begin{figure}[t]
  \centering
    \label{fig:notations1}
     \includegraphics[width=0.9\linewidth]{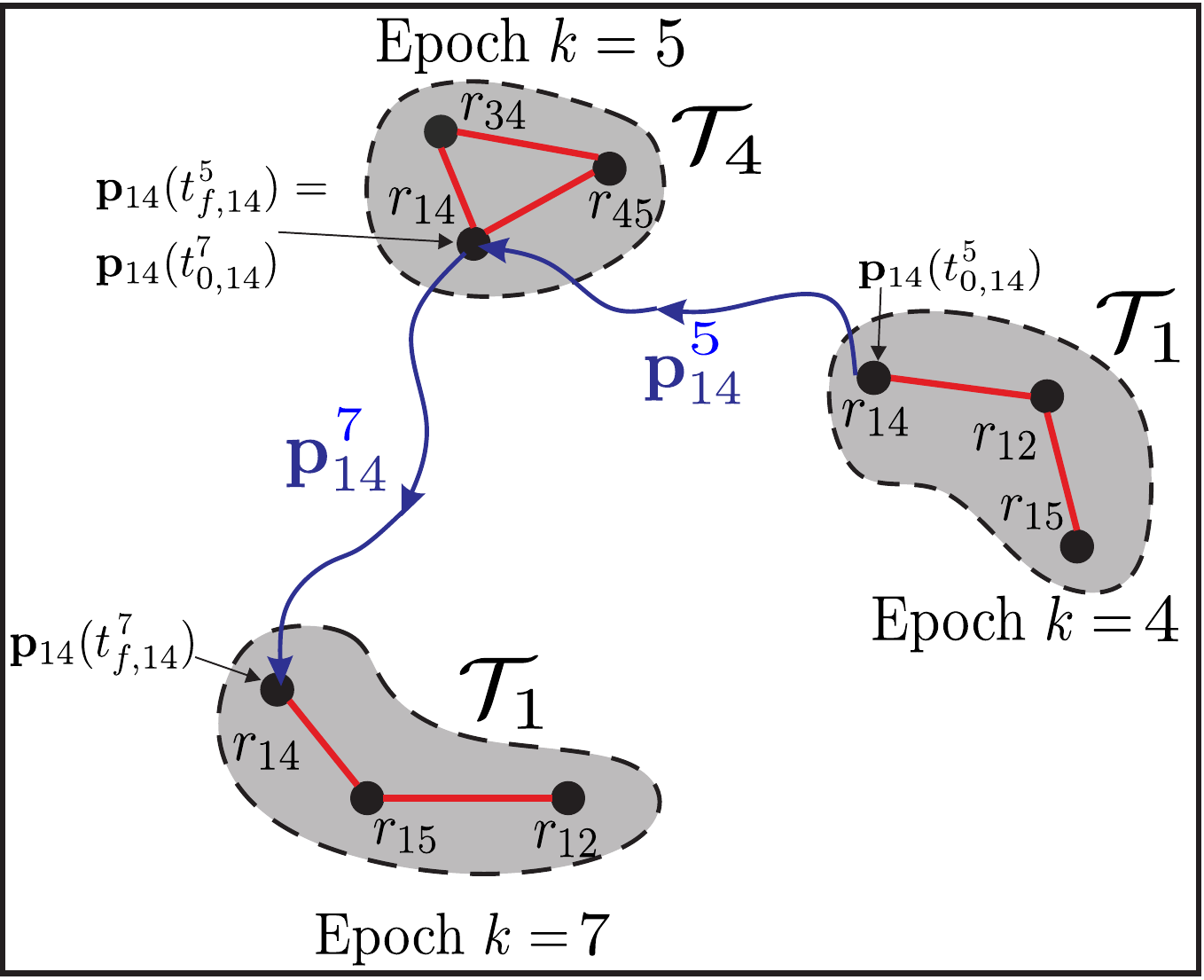}
  \caption{Illustration of the proposed planning algorithm for robot $r_{14}$. The communication schedule $\texttt{sched}_{14}$ has the following form, $\texttt{sched}_{14} =[1,~4,~X]^{\omega}$, with period $T=3$. The figure shows the path segments $\bbp^k_{14}$ that robot $r_{14}$ follows, where the paths $\bbp_{14}^5$ and $\bbp_{14}^7$ were designed at epochs $k=5-T=2$ and $k=7-T=4$, respectively. The starting location of the path $\bbp_{14}^{7}$ that leads to a connected configuration within team $\ccalT_1$ starts at the location $\bbp_{14}(t_{f,14}^5)$ at which the last communication event for robot $r_{14}$ occurred. Also, observe that since $\texttt{sched}_{14}(6)=X$, the path $\bbp_{14}^{6} = \emptyset$.}
  \label{fig:notations}
\end{figure}

To achieve this goal, we formulate an optimal control problem that the robots $r_{ij}\in\ccalT_i$ need to solve in order to design paths $\bbp_{ij}^{k+T}$ when they communicate at epoch $k$. 
Specifically, let $\ccalG^{k+T}_{\ccalT_i} = (\ccalV_{\ccalT_i}, \ccalE_{\ccalT_i}^{k+T})$ denote the communication graph of team $\ccalT_i$ at epoch $k+T$, where $\ccalV_{\ccalT_i} = \set{ r_{ij} \in \ccalT_i }$ and $ \ccalE_{\ccalT_i}^{k+T} = \{(r_{ij},r_{ie}) \, | \, \lVert\bbp_{ij}(t^{k+T}_{f,ij}) - \bbp_{ie}(t^{k+T}_{f,ie})\rVert \leq R \}$, where $R$ is the communication range.
We define by $\ccalC = \set{\ccalG (\ccalV, \ccalE) \, | \, \lambda_2(\ccalL(\ccalG)) > 0}$, the set of all connected graphs $\ccalG$ with vertices in the set $\ccalV$ and edges in the set $\ccalE$, i.e., the set of graphs $\ccalG$ whose Laplacian matrix $\ccalL(\ccalG)$ has positive second smallest eigenvalue \cite{Godsil_SPRINGER01}.

%
%
%
Then, the optimal control problem that we formulate to solve Problem \ref{prob:intMon} can be defined as follows:
\begin{align} \label{eq:optimProb}
& \min_{\substack{\bbP_{\ccalT_i}, t_{f,\ccalT_i} }} \sum_{t=t_{0,\ccalT_i}}^{t_{f,\ccalT_i}} \texttt{unc}(\bbP_{\ccalT_i}(t))  \\
& \ \st \bbp_{ij}(t^{k+T}_{0,ij}) = \bbp_{ij}(t^{\bbark}_{f,\ccalT_j}), \, \forall r_{ij} \in \ccalT_i \nonumber \\
& \ \ \ \ \ \ \bbp_{ij}(t) \in \Omega_{\text{free}}, \, \forall r_{ij} \in \ccalT_i \nonumber \\
& \ \ \ \ \ \  \bbp_{ij}(t+1) = \bbg(\bbp_{ij}(t),\bbu_{ij}(t)),~\forall r_{ij} \in \ccalT_i \nonumber \\
& \ \ \ \ \ \ \ccalG^{k+T}_{\ccalT_i} \in \ccalC,\nonumber\\
& \ \ \ \ \ \  t_{f,\ccalT_i}\geq t^{k+T}_{0,ij},  \, \forall r_{ij} \in \ccalT_i \nonumber\\
& \ \ \ \ \ \  t_{f,ij}=t_{f,\ccalT_i}, ~ \forall r_{ij} \in \ccalT_i,\nonumber\\
& \ \ \ \ \ \  \texttt{unc}(\bbP_{\ccalT_i}(t_{f,\ccalT_i}))\leq\delta . \nonumber
\end{align}
%
In \eqref{eq:optimProb}, $\bbP_{\ccalT_i}$ stands for the joint path of the robots $r_{ij}\in\ccalT_i$ that lives in the joint space $\Omega_{\text{free}}^{|\ccalT_i|}$. Projection of this path on the workspace of robot $r_{ij}$ yields the path segment $\bbp_{ij}^{k+T}$.
Also, $\texttt{unc}(\bbP_{\ccalT_i}(t)))$ denotes an arbitrary uncertainty metric such as a scalar function of the covariance matrix \cite{freundlich2018distributed}, the Fisher information matrix \cite{meJ1}, or the entropy or mutual information of the posterior distribution of $\bbx(t)$ \cite{leahy2015distributed}.
\footnote{The uncertainty metric $\texttt{unc}(\cdot)$ depends on the hidden state $\bbx(t)$ which in turn depends on the collected measurements and thus the robot paths $\bbP_{\ccalT_i}(t)$. In \eqref{eq:optimProb}, we consider the explicit dependence on $\bbP_{\ccalT_i}(t)$ for simplicity.}
Specifically, in Section \ref{sec:Sim}, we employ the  maximum eigenvalue of the posterior covariance as the uncertainty metric.
Then, the objective function measures the cumulative uncertainty in the estimation of the state $\bbx(t)$ after fusing information from all robots $r_{ij}\in\ccalT_i$ collected along their individual paths $\bbp_{ij}$ from $t_{0,\ccalT_i}=\min\{t_{0,ij}^{k+T}\}$ up to time $t_{f,\ccalT_i}$, given all earlier measurements available to team $\ccalT_i$. 
The first constraint in \eqref{eq:optimProb} enforces that the paths start from the location at which the previous communication event for the robots $r_{ij} \in \ccalT_i$ occurred.
The second constraint in \eqref{eq:optimProb} ensures that the designed paths are continuous and lie in the free space. The third constraint ensures that the dynamics \eqref{eq:rdynamics} is satisfied as the robots $r_{ij}\in\ccalT_i$ travel along their path segments  $\bbp_{ij}^{k+T}$. The fourth constraint ensures that the communication graph $\ccalG^{k+T}_{\ccalT_i}$ constructed at epoch $k+T$, once all robots in $\ccalT_i$ reach the end point of their respective paths $\bbp^{k+T}_{ij}$, is connected. The fifth constraint requires the final time $t_{f,\ccalT_i}$ to be greater than the initial time $t_{0,ij}^{k+T}$, for all robots $r_{ij}\in\ccalT_i$. The next constraint requires that all robots $r_{ij}\in\ccalT_i$ terminate the execution of their path segments $\bbp_{ij}^{k+T}$ at time instants $t_{f,ij}$ that are the same and equal to $t_{f,\ccalT_i}$, i.e., no waiting time. The final constraint in  \eqref{eq:optimProb} requires that the uncertainty metric $\texttt{unc}(\bbP_{\ccalT_i}(t_{f,\ccalT_i})))$ at time $t_{f,\ccalT_i}$ is below a specified threshold $\delta>0$. This ensures that the robots explore the environment sufficiently before they communicate again. This constraint can be replaced with other equivalent constraints serving the same purpose, e.g., minimum travel distance before communication. Appropriate selection of the constraint and the threshold value is problem-dependent; see Section \ref{sec:Sim} for more details. 


\subsection{Sampling-based Solution to Planning Problem} \label{sec:sampling}


Solving the path planning problem \eqref{eq:optimProb} in practice can be very challenging particularly because the function $\texttt{unc}(\bbP_{\ccalT_i}(t)) $ can be non-smooth, the set $\Omega_{\text{free}}$ is often non-convex, and the problem explicitly involves the time variable $t$.
Sampling-based algorithms can address the first two difficulties if the objective function is monotonic and bounded \cite{karaman2011sampling}.
The presence of the meeting time and locations in the optimal control problem \eqref{eq:optimProb} create additional challenges that further justify application of a sampling-based algorithm to solve \eqref{eq:optimProb}.
Specifically, here we propose a sampling-based algorithm that is built upon the RRT$^*$ algorithm \cite{karaman2011sampling}. The proposed algorithm is summarized in Algorithm \ref{alg:RRT}.

\begin{algorithm}[t]
\caption{Sampling-based Informative Path Planning}
\label{alg:RRT}
\begin{algorithmic}[1]
\STATE Set $\ccalV_s = \set{\bbv_0}$, $\ccalE_s = \emptyset$, and $\ccalX_g = \emptyset$;\label{rrt:line1}
\FOR{ $s = 1, \dots, n_{\text{sample}}$}\label{rrt:line2}
	\STATE Sample $\Omega_{\text{free}}$ to acquire $\bbv_{\text{rand}}$;\label{rrt:line3}
	\STATE Find the nearest node $\bbv_{\text{nearest}} \in \ccalV_s$ to $\bbv_{\text{rand}}$;\label{rrt:line4}
	\STATE  Steer from $\bbv_{\text{nearest}}$ toward $\bbv_{\text{rand}}$ to select $\bbv_{\text{new}}$;\label{rrt:line5}
		\IF { $\texttt{CollisionFree} (\bbv_{\text{nearest}}, \bbv_{\text{new}} )$ } 	\label{line:collisionFree}
			\STATE Update the set of vertices $\ccalV_s = \ccalV_s \cup \set{ \bbv_{\text{new}} }$; \label{rrt:line6}
			\STATE Build the set $\ccalV_{\text{near}} = \set{ \bbv \in \ccalV_s \, | \, \norm{\bbv - \bbv_{\text{new}}} < r }$;\label{rrt:line7}
			\STATE Extend the tree towards  $\bbv_{\text{new}}$ (Algorithm \ref{alg:extend});\label{rrt:extend}
			\STATE Rewire the tree (Algorithm \ref{alg:rewire});\label{rrt:rewire}
		\ENDIF
\ENDFOR
\STATE Find $\bbv_{\text{end}} \in \ccalX_g^i$ with smallest uncertainty; \label{rrt:node}
\STATE Return the path $\bbP_{\ccalT_i}^{k+T} = (\bbv_0, \dots, \bbv_{\text{end}} )$;\label{rrt:jointP}
\STATE Project $\bbP_{\ccalT_i}^{k+T}$ onto the workspace of $r_{ij}$ to get $\bbp_{ij}^{k+T}$;\label{projP}
\end{algorithmic}
\end{algorithm}   

Application of Algorithm \ref{alg:RRT} requires definition of a cost function and a goal set $\ccalX_g^i$. Referring to \eqref{eq:optimProb}, the cost of the joint path $\bbP_{\ccalT_i}$ that lives in $\Omega_{\text{free}}^{\abs{\ccalT_i}}$ is defined as $\texttt{Cost}(\bbP_{\ccalT_i})=\sum_{t=t_{0,i}}^{t_{f,\ccalT_i}} \texttt{unc}(\bbP_{\ccalT_i}(t))$. This function is additive and since the uncertainty metric $\texttt{unc}(\bbP_{\ccalT_i}) \geq 0$, it is also monotone,
 i.e., $\texttt{Cost}(\bbP_{\ccalT_i}^1)\leq\texttt{Cost}(\bbP_{\ccalT_i}^1|\bbP_{\ccalT_i}^2) = \texttt{Cost}(\bbP_{\ccalT_i}^1)+ \texttt{Cost}(\bbP_{\ccalT_i}^2)$, where $|$ stands for the concatenation of the paths $\bbP_{\ccalT_i}^1$ and $\bbP_{\ccalT_i}^2$. Thus, we can use a sampling-based algorithm to minimize it.
On the other hand, the goal set captures the constraints of the optimal control problem \eqref{eq:optimProb}. We define this set for team $\ccalT_i$ as
%
\begin{align}\label{goalset}
\ccalX_g^i  =\{\bbv \in \Omega_{\text{free}}^{|\ccalT_i|} \, |~&\text{(i)} \ \lambda_2( \ccalL( \ccalG^{k+T}_{ \ccalT_i} (\bbv)) ) > 0,  \nonumber\\& \text{(ii)} \min_{r_{ij}\in\ccalT_i} t_{ij}(\bbv)=\max_{r_{ij}\in\ccalT_i} t_{ij}(\bbv)  \nonumber\\& \text{(iii)}~\texttt{unc}(\bbv)\leq\delta\}.
\end{align}
In words, the goal set $\ccalX_g^i$ collects all points $\bbv \in \Omega_{\text{free}}^{|\ccalT_i|}$ that satisfy three conditions. First, the configuration $\bbv$ should correspond to a connected communication graph at epoch $k+T$ capturing the fourth constraint in \eqref{eq:optimProb}. Second, the time instants at which robots $r_{ij}$ arrive at the projection of $\bbv\in \Omega_{\text{free}}^{|\ccalT_i|} $ to their own workspace, denoted by $t_{ij}(\bbv)$, are the same for all robots $r_{ij}\in\ccalT_i$, i.e., $\min_{r_{ij}\in\ccalT_i} t_{ij}(\bbv)=\max_{r_{ij}\in\ccalT_i} t_{ij}(\bbv)$ captures the sixth constraint in \eqref{eq:optimProb}. Third, the uncertainty metric evaluated at $\bbv$ should be less than a threshold $\delta$ capturing the last constraint in  \eqref{eq:optimProb}.

Algorithm \ref{alg:RRT} generates a tree denoted by $\ccalG_s=\{\ccalV_s,\ccalE_s\}$ that resides in $\Omega_{\text{free}}^{\abs{\ccalT_i}}$ where $\ccalV_s$ denotes its set of nodes and $\ccalE_s$ denotes its set of edges. This tree is initialized as $\ccalV_s = \set{\bbv_0}$, $\ccalE_s = \emptyset$ [line \ref{rrt:line1}, Alg. \ref{alg:RRT}] where the root $\bbv_0\in\Omega_{\text{free}}^{\abs{\ccalT_i}}$ of the tree is selected so that it matches the positions $\bbp_{ij}(t_{0,ij}^{k+T})$ of the robots $r_{ij}\in\ccalT_i$ in the joint space $\Omega_{\text{free}}^{\abs{\ccalT_i}}$. This way we ensure that the first constraint in \eqref{eq:optimProb} is satisfied. In other words, the tree generated by Algorithm \ref{alg:RRT} is rooted at the end point of the previous paths $\bbp_{ij}^{\bbark}$, where $k < \bar{k} \leq k+T-1$. The tree $\ccalG_s=\{\ccalV_s,\ccalE_s\}$ is built incrementally by adding new samples $\bbv\in\Omega_{\text{free}}^{\abs{\ccalT_i}}$ to $\ccalV_s$ and corresponding edges to $\ccalE_s$, based on three operations: `\texttt{Sample}' [line \ref{rrt:line3}, Alg. \ref{alg:RRT}], `\texttt{Extend}' [line \ref{rrt:extend}, Alg. \ref{alg:RRT}], and `\texttt{Rewire}' [line \ref{rrt:rewire}, Alg. \ref{alg:RRT}]. 
%
After taking $n_{\text{sample}}$ samples, where $n_{\text{sample}}>0$ is user-specified, Algorithm \ref{alg:RRT} terminates and returns the node $\bbv_{\text{end}} \in \ccalX_g^i$ with the smallest cost. Then, the path $\bbP_{\ccalT_i}^{k+T} = (\bbv_0, \dots, \bbv_{\text{end}})$ that connects $\bbv_{\text{end}}$ to the root $\bbv_0$ of the tree can be obtained [lines \ref{rrt:node}-\ref{rrt:jointP}, Alg. \ref{alg:RRT}]. The individual path segments $\bbp_{ij}^{k+T}$ of the robots are obtained by projecting the joint path $\bbP_{\ccalT_i}^{k+T}$ onto the workspace of each robot. Note that the paths start at different initial times $t^{k+T}_{0,ij}$ but end at the same final time $t^{k+T}_{f,\ccalT_i}$ since $\bbv_{\text{end}}$ belongs to the goal set. Figure \ref{fig:rrtEx} shows a schematic of the algorithm.

\begin{figure}[t]
  \centering
     \includegraphics[width=0.8\linewidth]{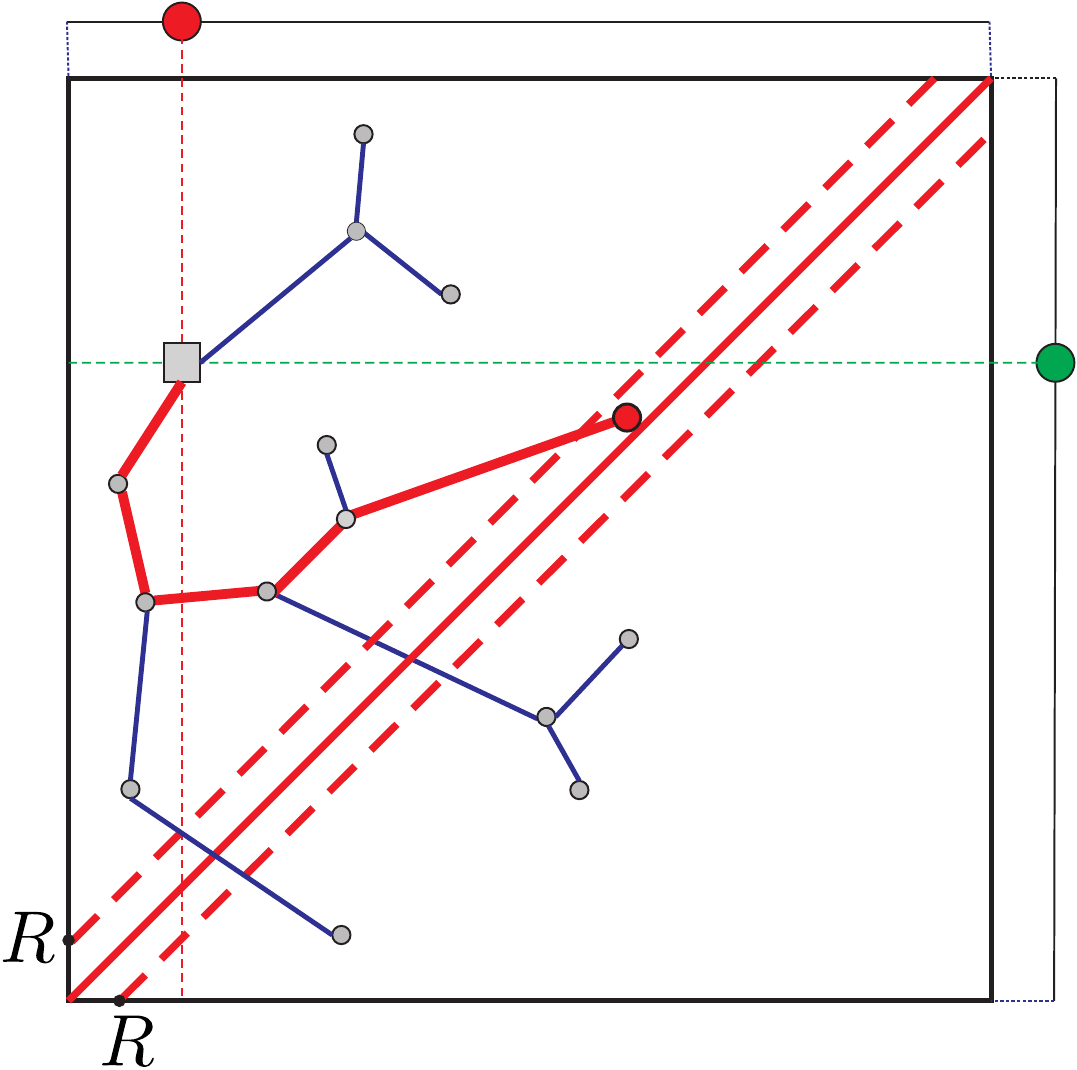}
  \caption{The schematic representation of a sampling-based solution for the planning problem \eqref{eq:optimProb}. Robots are illustrated by red and green disks residing in $1$-D workspace. This results in a $2$-D joint space $\Omega^{|\ccalT_i|}$ shown above. The gray square and circles in this space denote the root of the tree and the samples taken by algorithm. The goal set $\ccalX_g^i\subset\Omega^{|\ccalT_i|}$ is a subset of the area between the red dashed lines where the two robots are within communication range $R$. The time constraint (ii) in \eqref{goalset} depends on the initial times of the robots and determines the size of this subset. Red circle stands for a sample that lies within the goal set and is connected to the root through the red solid lines.}
  \label{fig:rrtEx}
\end{figure}

\subsubsection{Sampling function} \label{sec:sampFun}
 
At every iteration of Algorithm \ref{alg:RRT}, a new sample $\bbv_{\text{rand}}$ from the joint space $\Omega_{\text{free}}^{|\ccalT_i|}$ is taken [line \ref{rrt:line3}, Alg. \ref{alg:RRT}]. We assume that the first $B$ samples are drawn from a distribution $f_1:\Omega_{\text{free}}^{|\ccalT_i|}\rightarrow[0,1]$, e.g., a uniform distribution that forces the robots to explore $\Omega_{\text{free}}^{|\ccalT_i|}$, and then samples are drawn from a second distribution  $f_2:\Omega_{\text{free}}^{|\ccalT_i|}\rightarrow[0,1]$ that promotes the construction of a connected communication graph $\ccalG_{\ccalT_i}$. Specific choices for these sampling functions are discussed in Section \ref{sec:Sim}.

\subsubsection{Steer}\label{sec:steer1}
%

The next step in Algorithm \ref{alg:RRT} is to examine if the tree can be extended towards the new sample $\bbv_{\text{rand}}$. First, among all the nodes in the set $\ccalV_s$ we find the nearest node to $\bbv_{\text{rand}}$, which is denoted by $\bbv_{\text{nearest}}$ [line \ref{rrt:line4}, Alg. \ref{alg:RRT}]. Then, we define a steering function that given the robot dynamics \eqref{eq:rdynamics} returns a point $\bbv_{\text{new}}$ [line \ref{rrt:line5}, Alg. \ref{alg:RRT}], that (i) minimizes the distance $\left\|\bbv_{\text{new}}-\bbv_{\text{rand}}\right\|$, (ii) satisfies $\left\|\bbv_{\text{nearest}}-\bbv_{\text{new}}\right\|\leq\epsilon$, for some $\epsilon>0$, and (iii) minimizes the time difference $\max_{r_{ij}\in\ccalT_i}\bbv_{\text{new}}-\min_{r_{ij}\in\ccalT_i}\bbv_{\text{new}}$.\footnote{Depending on the robot dynamics \eqref{eq:rdynamics} and the arrival times of robots $r_{ij}\in\ccalT_i$ at their respective positions in the configuration $\bbv_{\text{nearest}}\in \Omega_{\text{free}}^{|\ccalT_i|}$, it may be impossible for the robots to arrive at the locations determined by $\bbv_{\text{new}}$ at the same time. Then, `\texttt{Steer}' minimizes the difference in arrival time.} Note that the steering function captures the dynamics of the robots \cite{karaman2010optimal}, i.e., the third constraint in \eqref{eq:optimProb}. The specific steering function used in this paper for robots with single integrator dynamics and bounded maximum velocities is discussed in Appendix \ref{sec:steer}.

\subsubsection{Extend}\label{sec:ext}

\begin{algorithm}[t]
\caption{\texttt{Extend}}
\label{alg:extend}
\begin{algorithmic}[1]
\STATE Set $\bbv_{\min} = \bbv_{\text{nearest}}$ and $\texttt{unc}_{\min}=\texttt{Cost}(\bbv_{\text{new}})$;\label{ext:line1}
\FOR{ $\bbv_{\text{near}} \in \ccalV_{\text{near}}$}\label{ext:line2}
	\STATE Compute $\texttt{Cost}(\bbv_{\text{new}})$ with $\bbv_{\text{near}}$ as the parent (Algorithm \ref{alg:kalman});
	 \IF { $\texttt{CollisionFree} (\bbv_{\text{near}}, \bbv_{\text{new}} ) ~\wedge~ \texttt{Cost}(\bbv_{\text{new}})< \texttt{unc}_{\min}$}\label{ext:line3}
		\STATE Set $\bbv_{\min} = \bbv_{\text{near}}$, $\texttt{unc}_{\min} = \texttt{Cost}(\bbv_{\text{new}})$;\label{ext:line4}
	\ENDIF 
\ENDFOR
\STATE Update the set of edges $\ccalE_s = \ccalE_s \cup \set{ (\bbv_{\min}, \bbv_{\text{new}}) } $;\label{ext:line6}
\STATE  $\texttt{Cost}(\bbv_{\text{new}})=\texttt{unc}_{\text{min}}$;\label{ext:cost}
\IF {$\bbv_{\text{new}} \in \ccalX_g^i$}\label{ext:line7}
	 \STATE Update $\ccalX_g^i = \ccalX_g^i \cup \set{  \bbv_{\text{new}} }$; \label{ext:line8}
\ENDIF
\end{algorithmic}
\end{algorithm}

Given the new point $\bbv_{\text{new}}$, Algorithm \ref{alg:RRT} next examines if the tree $\ccalG_s$ can be extended towards $\bbv_{\text{new}}$ [lines \ref{line:collisionFree}-\ref{rrt:extend}, Alg. \ref{alg:RRT}]. Specifically, first it examines if $\bbv_{\text{new}}$ can be reached from $\bbv_{\text{nearest}}$ through a path that satisfies the dynamics \eqref{eq:rdynamics} and lies in $\Omega_{\text{free}}$. This is checked using the function $\texttt{CollisionFree}(\bbv_{\text{nearest}}, \bbv_{\text{new}})$ [line \ref{line:collisionFree}, Alg. \ref{alg:RRT}].
If this is the case, then $\bbv_{\text{new}}$ is added to the set $\ccalV_s$  [line \ref{rrt:line6}, Alg. \ref{alg:RRT}] and we construct the set $\ccalV_{\text{near}} = \set{ \bbv \in \ccalV_s \, | \, \norm{\bbv - \bbv_{\text{new}}} < r }$, for some $r>0$ selected as in \cite{karaman2011sampling}, that collects all nodes in $\ccalV_s$ that are within distance $r$ from the point $\bbv_{\text{new}}$ [line \ref{rrt:line7}, Alg. \ref{alg:RRT}].
Then, among all the nodes in $\ccalV_{\text{near}}$, we pick the parent node of $\bbv_{\text{new}}$, denoted by $\bbv_{\text{min}}$, that incurs the minimum possible cost for the node $\bbv_{\text{new}}$ [line \ref{rrt:extend}, Alg. \ref{alg:RRT}].
This process is described in Algorithm \ref{alg:extend}.

Algorithm \ref{alg:extend} is initialized with $\bbv_{\text{min}}=\bbv_{\text{nearest}}$ [line \ref{ext:line1}, Alg. \ref{alg:extend}]. The cost of $\bbv_{\text{new}}$ given the parent node $\bbv_{\text{min}}$ is denoted by $\texttt{unc}_{\text{min}}=\texttt{Cost}(\bbv_{\text{new}})$ and its calculation is discussed in Section \ref{sec:cost}. 
Then, for every candidate parent node $\bbv_{\text{near}}\in\ccalV_{\text{near}}$ of $\bbv_{\text{new}}$, Algorithm \ref{alg:extend} checks if there is a collision-free path that connects $\bbv_{\text{new}}$ to $\bbv_{\text{near}}$ and calculates the cost  $\texttt{Cost}(\bbv_{\text{new}})$. If $\texttt{Cost}(\bbv_{\text{new}})\leq \texttt{unc}_{\text{min}}$, then the parent node of  $\bbv_{\text{new}}$ is updated as  $\bbv_{\min} = \bbv_{\text{near}}$ and the corresponding cost is updated as $\texttt{unc}_{\min} = \texttt{Cost}(\bbv_{\text{new}})$. This process is repeated for all nodes $\bbv_{\text{near}}\in\ccalV_{\text{near}}$.
%
Once all nodes $\bbv_{\text{near}}\in\ccalV_{\text{near}}$ are examined, the parent $\bbv_{\text{min}}$ of $\bbv_{\text{new}}$ is selected and we update the set of edges as $\ccalE_s = \ccalE_s \cup \set{ (\bbv_{\min}, \bbv_{\text{new}}) }$ [line \ref{ext:line6}, Alg. \ref{alg:extend}]. If $\bbv_{\text{new}}$ satisfies the three conditions of the goal set, specified in \eqref{goalset}, then we update the set as $\ccalX_g^i = \ccalX_g^i \cup \set{\bbv_{\text{new}}}$ [line \ref{ext:line8}, Alg. \ref{alg:extend}]. 

\subsubsection{Rewire}\label{sec:rew}

\begin{algorithm}[t]
\caption{\texttt{Rewire}}
\label{alg:rewire}
\begin{algorithmic}[1]
\FOR{ $\bbv_{\text{near}} \in \ccalV_{\text{near}}$ }\label{rew:line1}
				\IF { $\texttt{CollisionFree} (\bbv_{\text{new}},\bbv_{\text{near}})$ $~\wedge~$ $\min_{r_{ij}\in\ccalT_i}t_{ij}(\bbv_{\text{near}}) = \max_{r_{ij}\in\ccalT_i}t_{ij}(\bbv_{\text{near}})$}\label{rew:line2}
				    \STATE Compute cost $\overline{\texttt{Cost}}(\bbv_{\text{near}})$ with $\bbv_{\text{new}}$ as the parent (Algorithm \ref{alg:kalman});\label{rew:line3}
				    \IF {$\overline{\texttt{Cost}}(\bbv_{\text{near}}) \leq \texttt{Cost}(\bbv_{\text{near}})$ $~\wedge~$ $\min_{r_{ij}\in\ccalT_i}t_{ij}(\bbv_{\text{near}})=\max_{r_{ij}\in\ccalT_i}t_{ij}(\bbv_{\text{near}})$}		\label{rew:line4}
				    \STATE Update the set of edges $\ccalE_s = \ccalE_s \setminus \set{ (\bbv_{\text{parent}}, \bbv_{\text{near}} ) } \cup \set{ (\bbv_{\text{new}}, \bbv_{\text{near}}) } $;\label{rew:line5}
				    \ENDIF
				\ENDIF 
				
			\STATE Update the goal set $\ccalX_g^i$;\label{rew:updGoal} 
\ENDFOR		    
\end{algorithmic}
\end{algorithm} 

After extending the tree $\ccalG_s$ towards $\bbv_{\text{new}}$ the rewiring process follows that checks if it is possible to further reduce the cost of the nodes of the tree by rewiring them through $\bbv_{\text{new}}$ [line \ref{rrt:rewire}, Alg. \ref{alg:RRT}]. The rewiring operation is described in Algorithm \ref{alg:rewire}. In particular, for every node $\bbv_{\text{near}}\in\ccalV_{\text{near}}$ that (i) satisfies $\min_{r_{ij}\in\ccalT_i} t_{ij}(\bbv_{\text{near}})=\max_{r_{ij}\in\ccalT_i} t_{ij}(\bbv_{\text{near}})$, and (ii) can be connected through an obstacle-free path to $\bbv_{\text{new}}$, we compute $\overline{\texttt{Cost}}(\bbv_{\text{near}})$ assuming that $\bbv_{\text{near}}$ was connected to $\bbv_{\text{new}}$ [lines \ref{rew:line1}-\ref{rew:line3}, Alg. \ref{alg:rewire}]. Then we rewire $\bbv_{\text{near}}$ if the cost $\overline{\texttt{Cost}}(\bbv_{\text{near}})$ using $\bbv_{\text{new}}$ as its parent is less than the current cost $\texttt{Cost}(\bbv_{\text{near}})$ and if there are control inputs $\bbu_{ij}$ that can still drive all robots to
$\bbv_{\text{near}}$ at the same time [line \ref{rew:line4}, Alg. \ref{alg:rewire}]. 
If both conditions are satisfied, then we update the set of edges $\ccalE_s$ by deleting the previous edge $(\bbv_{\text{parent}},\bbv_{\text{near}})$, where  $\bbv_{\text{parent}}$ stands for the previous parent of $\bbv_{\text{near}}$, and adding the new edge  $(\bbv_{\text{new}},\bbv_{\text{near}})$ [line \ref{rew:line5}, Alg. \ref{alg:rewire}]. Notice that the requirement that a node $\bbv_{\text{near}}\in\ccalV_{\text{near}}$ should satisfy $\min_{r_{ij}\in\ccalT_i} t_{ij}(\bbv_{\text{near}})=\max_{r_{ij}\in\ccalT_i} t_{ij}(\bbv_{\text{near}})$ to get rewired does not exist in the the $\text{RRT}^{*}$ algorithm \cite{karaman2011sampling}. 

If a node $\bbv_{\text{near}}$ is rewired then we update the cost of all successor nodes of $\bbv_{\text{near}}$ in $\ccalV_s$. Also, notice that after rewiring a node $\bbv_{\text{near}}$, the time instant $t_{ij}(\bbv_{\text{near}})$ associated with the arrival of robots $r_{ij}$ at $\bbv_{\text{near}}$ may change. As a result, $\texttt{Cost}(\bbv_{\text{near}})$ may change, since the cost $\texttt{Cost}(\bbv)$ depends on the time instants at which the robots $r_{ij}\in\ccalT_i$ are physically present at their respective positions in the configuration $\bbv\in \Omega_{\text{free}}^{|\ccalT_i|}$; see Section \ref{sec:cost}. Consequently the cost of all successor nodes of $\bbv_{\text{near}}$ may change as well. Thus, after rewiring, the goal set needs to be updated, by checking if $\bbv_{\text{near}}$ and its successor nodes satisfy the relevant conditions [line \ref{rew:updGoal}, Algorithm \ref{alg:rewire}].

\subsection{Computation of Cost Function}\label{sec:cost}
In this section, we discuss the computation of the cost $\texttt{Cost}(\bbv_{\text{child}})$ of a node $\bbv_{\text{child}}$ given a candidate parent $\bbv_{\text{parent}}$ required in Algorithms \ref{alg:extend} and \ref{alg:rewire}. Notice that $\texttt{Cost}(\bbv_{\text{child}})$ corresponds to the cost of the path that connects the node $\bbv_{\text{child}}$ to the root of the tree $\bbv_0$.  
As our uncertainty metric, we use the maximum eigenvalue of the posterior covariance $\bbC_{\ccalT_i}(t)$ computed using only local information available to team $\ccalT_i$, i.e., $\texttt{unc}(\bbP_{\ccalT_i}(t)) = \lambda_n(\bbC_{\ccalT_i}(t))$, where $\lambda_n(\cdot)$ denotes maximum eigenvalue. Keeping in mind that the posterior estimate $\hat{\bbx}_{\ccalT_i}(t)$ of $\bbx(t)$ and covariance $\bbC_{\ccalT_i}(t)$ are obtained using only the local measurements available to team $\ccalT_i$, we drop the subscript $\ccalT_i$ from these notations for simplicity. To obtain the posterior estimate, here we use the Extended Kalman Filter (EKF) to fuse the measurements that will be collected along the paths of the robots. The details are given in Algorithm \ref{alg:kalman}. Specifically, to compute $\texttt{Cost}(\bbv_{\text{child}})$, we first construct a finite sequence $\ccalY=\left\{(t_1, \bbq_1),\dots,(t_2, \bbq_2),\dots,(t_{\Xi}, \bbq_{\Xi}) \right\}$, where 
\begin{equation} \label{eq:intervalKF}
t_1 = \min_{r_{ij}\in\ccalT_i} t_{ij}(\bbv_{\text{parent}}) \ \ \text{and} \ \	t_{\Xi} = \max_{r_{ij}\in\ccalT_i} t_{ij}(\bbv_{\text{child}}),
\end{equation}
that collects the time instants $t_{\xi}$ and corresponding locations $\bbq_{\xi}\in\Omega_{\text{free}}$ where the robots in team $\ccalT_i$ will take measurements, following their projected paths from $\bbv_{\text{parent}}$ to $\bbv_{\text{child}}$. We assume that the robots take measurements every $\Delta t \in \mbN$. 
%
%

%

The cost of node $\bbv_{\text{child}}$ is initialized as $\texttt{Cost}(\bbv_{\text{child}})=\texttt{Cost}(\bbv_{\text{parent}})$ [line \ref{kalman:line1}, Alg. \ref{alg:kalman}]. Then, using the EKF from $t_{1}$ until $t_{\Xi}$ we update the cost of $\bbv_{\text{child}}$ by fusing sequentially all predicted measurements that will be taken at locations and time instants collected in $\ccalY$.
Specifically, during the time intervals $(t_{\xi-1},t_{\xi})$, we execute the prediction stage \eqref{eq:predKF} of the EKF to compute the predicted covariance matrix $\bbC(t)$ of the estimate $\hat{\bbx}(t)$. Given the predicted covariance matrix $\bbC(t)$, we compute $\texttt{unc}(\bbP_{\ccalT_i}(t))$, where $\bbP_{\ccalT_i}(t)\in\Omega_{\text{free}}^{|\ccalT_i|}$ is the joint position of robots in team $\ccalT_i$ at time $t$. Then, we update the cost of $\bbv_{\text{child}}$ as $\texttt{Cost}(\bbv_{\text{child}})=\texttt{Cost}(\bbv_{\text{child}})+\texttt{unc}(\bbP_{\ccalT_i}(t))$, for all discrete $t\in(t_{\xi-1},t_{\xi})$ [lines \ref{kalman:line6}-\ref{kalman:line8}, Alg. \ref{alg:kalman}]. At time step $t_{\xi}$, using the predicted measurement that would be taken at this time instant, we compute the estimated covariance matrix $\bbC(t_{\xi})$ [lines \ref{kalman:line10}-\ref{kalman:line12}, Alg. \ref{alg:kalman}]. Given $\bbC(t_{\xi})$, we compute $\texttt{unc}(\bbP_{\ccalT_i}(t_{\xi}))$ and update the cost of node $\bbv_{\text{child}}$ as
 $\texttt{Cost}(\bbv_{\text{child}})=\texttt{Cost}(\bbv_{\text{child}})+\texttt{unc}(\bbP_{\ccalT_i}(t_{\xi}))$ [line \ref{kalman:line15}, Alg. \ref{alg:kalman}]. This procedure is repeated for all intervals $(t_{\xi},t_{\xi+1})$.

It is possible that multiple concurrent measurements are taken by robots. In this case, these measurements must be fused together in Algorithm \ref{alg:kalman}. Moreover, during planning, the actual measurements $\bby(t_{\xi})$ in \eqref{eq:inovMeas} are unavailable and the predicted state will directly be used in \eqref{eq:fuseCov}.

 
\begin{algorithm}[t]
\caption{Extended Kalman Filter}
\label{alg:kalman}
\begin{algorithmic}[1]
\REQUIRE Dynamics \eqref{eq:dynamics} and observation \eqref{eq:measModel} models;
\REQUIRE Finite sequence of measurements $\ccalY$;
\REQUIRE Process noise covariance $\bbQ(t)$ and measurement noise covariance $\bbR(t)$;
\REQUIRE Initial estimates $\hat{\bbx}(t_{\text{near}})$ and $\bbC(t_{\text{near}})$;	
%
\STATE Initialize $\texttt{Cost}(\bbv_{\text{child}})=\texttt{Cost}(\bbv_{\text{parent}})$; \label{kalman:line1}
\STATE Set $\xi=2$;\label{kalman:line3} 
\WHILE{ $\xi\leq \Xi$}\label{kalman:line5}
	\FOR{ $t = t_{\xi-1}:t_{\xi} $ }\label{kalman:line6}
		\STATE Compute the Jacobians $\bbF(t-1) = \nabla_{\bbx} \bbf|_{\hat{\bbx}(t-1), \bbu(t-1)}$ and $\bbL(t-1) = \nabla_{\bbw} \bbf|_{\hat{\bbx}(t-1), \bbu(t-1)}$;
		\STATE Compute the EKF prediction:\label{kalman:line7}
		\begin{subequations} \label{eq:predKF}
		\begin{align}
			&\hat{\bbx}(t) = \bbf(\hat{\bbx}(t-1),\bbu(t-1) ) \\
			&\bbC(t) = \bbF(t-1) \, \bbC(t-1) \, \bbF(t-1)^T + \\ \nonumber
			& \ \  \ \ \ \ \ \ \ \, \bbL(t-1) \bbQ(t-1) \bbL(t-1)^T \\
			&\texttt{Cost}(\bbv_{\text{child}})=\texttt{Cost}(\bbv_{\text{child}})+\texttt{unc}(\bbP_{\ccalT_i}(t))	\label{eq:predCov}
		\end{align}
		\end{subequations}
	\ENDFOR\label{kalman:line8}
\STATE Compute the Jacobians $\bbH(t_{\xi}) = \nabla_{\bbx} \bbh|_{\hat{\bbx}(t_{\xi}), \bbq_{\xi}}$ and $\bbM(t_{\xi}) = \nabla_{\bbv} \bbh|_{\hat{\bbx}(t_{\xi}), \bbq_{\xi}}$;
\STATE Compute Innovations: \label{kalman:line10}
	\begin{subequations}
	\begin{align}
	\bar{\bby}(t_{\xi}) &= \bby(t_{\xi}) - \bbh(\hat{\bbx}(t_{\xi}), \bbq_{\xi}) \label{eq:inovMeas} \\
	\bbS(t_{\xi}) &= \bbH(t_{\xi}) \, \bbC(t_{\xi}) \, \bbH(t_{\xi})^T +  \bbM(t_{\xi}) \bbR(t_{\xi}) \bbM(t_{\xi})^T
	\end{align}
	\end{subequations}
	\STATE Kalman gain: $\bbK(t_{\xi}) = \bbC(t_{\xi}) \bbH(t_{\xi})^T \, \bbS(t_{\xi})^{-1}$;	\label{kalman:line11}
	\STATE Compute the EKF estimates:	\label{kalman:line12}
	\begin{subequations}
	\begin{align}
	\hat{\bbx}(t_{\xi}) &= \hat{\bbx}(t_{\xi}) + \bbK(t_{\xi}) \barby(t_{\xi})\\
	\bbC(t_{\xi})& = \left[ \bbI_n - \bbK(t_{\xi}) \, \bbH(t_{\xi}) \right] \, \bbC(t_{\xi})	\label{eq:fuseCov}
	\end{align}
	\end{subequations}
	\STATE Update $\texttt{Cost}(\bbv_{\text{child}})=\texttt{Cost}(\bbv_{\text{child}})+\texttt{unc}(\bbP_{\ccalT_i}(t_{\xi}))$;	\label{kalman:line15}
\ENDWHILE
\end{algorithmic}
\end{algorithm}   
%
 

\begin{rem} [Alternative Estimation Filters]
In Algorithm \ref{alg:kalman} we use the EKF to fuse the measurements collected by the robots but any other estimation filter could be used. Particularly, the Information form of the Kalman filter is attractive for distributed fusion since the information from different measurements is additive, cf. \cite{DFDSN2001DS}. For the case of out-of-order data, the Information form requires storage of an information vector, with length $n$, corresponding to each measurement \cite{DADDSN2001ND} and the Kalman form requires storage of the time, location, and value of each measurement, i.e., a vector of length $(1+d+m)$. Therefore, the selection of the appropriate filter is problem-dependent.
For networks with limited resources, there exist a family of DSE algorithms that reduce the communication and computational overhead by performing inexact robust fusion of the out-of-order data; see \cite{DDF2016CA} and the references therein. Other approximate methods utilize low rank representations of the covariance matrix for efficient data fusion \cite{CSKFNSE2015LKGD,RTDALSS2015GKLD}. Application of such methods can particularly improve the efficiency of path planning without sacrificing accuracy. Furthermore, the present value of delayed information diminishes as the delay increases \cite{DADDSN2001ND}; thus, the robots can stop communicating old information to further conserve bandwidth and memory if necessary. Other heuristics of this kind may be adapted depending on application; see, e.g., \cite{DDFMN2005MC}.
\end{rem}

\subsection{Completeness and Optimality}
 
Our proposed sampling-based algorithm is probabilistically complete, i.e., if there exist paths $\bbp_{ij}^{k+T}$ that terminate at the goal set \eqref{goalset}, then Algorithm \ref{alg:RRT} will find them with probability 1, as $n_{\text{sample}}\to\infty$. To show this result, recall that $\text{RRT}^{*}$ is probabilistically complete given the functions `\texttt{Steer}' and `\texttt{Extend}' \cite{karaman2011sampling}. The only requirement in the `\texttt{Steer}' function is that the node $\bbv_{\text{new}}$ is closer to $\bbv_{\text{rand}}$ than $\bbv_{\text{nearest}}$ is, which is trivially satisfied by Algorithm \ref{alg:RRT}. Finally since the `\texttt{Extend}' function, described in Algorithm \ref{alg:extend}, is the as same as the extend operation of $\text{RRT}^{*}$ \cite{karaman2011sampling}, we conclude that Algorithm \ref{alg:RRT} is probabilistically complete.
 
Nevertheless, Algorithm \ref{alg:RRT} is not asymptotically optimal, since rewiring a node that belongs to $\ccalV_{\text{near}}$ takes place only if its cost after rewiring decreases and all robots can still arrive at this node simultaneously. On the other hand, in the rewiring step of $\text{RRT}^*$ the time constraint considered here is not present. To recover the asymptotic optimality of the $\text{RRT}^{*}$ we can relax the time constraint (ii) in the goal set \eqref{goalset} and perform the rewiring step as in the $\text{RRT}^*$ algorithm. As a result, the robots will not form a connected graph simultaneously. 
Requiring all robots to form a connected graph at the same time has two benefits. First, it prevents the robots from waiting at a fixed location which is suboptimal for the estimation of a dynamic process. Second aligning the time stamp of measurements of the agents during sampling, speeds up the fusion process and alleviates the need for storing measurement information every time a new node is added to the tree. To see this, note that in order to compute the cost of a new sample $\bbv_{\text{child}}$, we fuse the measurements in the interval $(t_1,t_{\Xi})$ defined by \eqref{eq:intervalKF}. The length of this interval increases when the time constraint (ii) is not enforced. Moreover, the measurements associated with time instants that belong to the interval $(\min_{r_{ij}\in\ccalT_i}t_{ij}(\bbv_{\text{child}}), \, t_{\Xi})$ need to be retained for the computation of the cost function for the future samples whose parent is $\bbv_{\text{child}}$.

\subsection{Integrated Path Planning and Intermittent Communication Control} \label{sec:}
The integrated algorithm is described in Algorithm \ref{alg:dhc}. Every robot $r_{ij}$ follows the path $\bbp_{ij}^k$ while collecting measurements [line \ref{dhc:line3}, Alg. \ref{alg:dhc}]. \footnote{Before the first communication event, robots in team $\ccalT_i$ have not designed yet any paths $\bbp_{ij}^k$. The paths that the robots of team $\ccalT_i$ should follow to participate in their first communication event are \textit{a priori} planned and are denoted by $\bbp_{ij}^{\text{init}}$.}
When robot $r_{ij}$ reaches the final waypoint of that path segment, all robots in team $\ccalT_i$ form a connected communication network, simultaneously. When this happens, the robots exchange the information they have collected since their last communication event. This information can be due to new measurements or communication with other teams $\ccalT_j$ [line \ref{dhc:line5}, Alg. \ref{alg:dhc}].
Particularly, in order to identify duplicate information, each robot constructs and maintains a local copy of the information that it owns. This process is very similar to the concept of information trees in \cite{DDFMN2005MC}.
Given the new set of measurements, the state estimate and the respective covariance are updated using the Extended Kalman Filter [line \ref{dhc:line6}, Alg. \ref{alg:dhc}] given in Algorithm \ref{alg:kalman}. 
Finally, the robots in team $\ccalT_i$ compute the paths $\bbp_{ij}^{k+T}$ that will allow them to reconnect at epoch $k+T$ [line \ref{dhc:line7}, Alg. \ref{alg:dhc}]. 

\begin{rem} [Memory Requirement]
From Proposition \ref{prop:kstar}, it follows that information collected by any robot $r_{ij}$ will be propagated to all teams within at most $D_{\ccalG_{\ccalT}}$ epochs. Thus, information collected in earlier epochs can be discarded.
\end{rem}


\begin{rem} [Exogenous Disturbances]
Note that exogenous disturbances may affect the travel times of the robots. To account for such delays, we can require each robot $r_{ij}\in\ccalT_i$ that finishes the execution of its path to wait until all other robots in team $\ccalT_i$ also complete their paths. We can show that under this control policy, the network is \textit{deadlock-free} meaning that there are no robots in any team $\ccalT_i$ that are waiting forever to communicate with other robots in team $\ccalT_i$; see Proposition 6.2 in \cite{kantaros2017temporal} for more details.
\end{rem}

\begin{algorithm}[t]
\caption{Integrated Control Framework for Robot $r_{ij}$}
\label{alg:dhc}
\begin{algorithmic}[1]
\REQUIRE Initial paths $\bbp_{ij}^{\text{init}}$ and the plan $\texttt{sched}_{ij}$;
\FOR {$k=1:\infty$}\label{dhc:line1}
				\STATE Move along the path $\bbp_{ij}^k$ and collect measurements every $\Delta t$;\label{dhc:line3}
				\STATE When the final waypoint of $\bbp_{ij}^k$ is reached, exchange information with all other robots $r_{ij}\in\ccalT_i$;\label{dhc:line5}
				\STATE Update the state estimate and the covariance matrix up to the current time $t$ using Algorithm \ref{alg:kalman};\label{dhc:line6}
				\STATE Compute the paths $\bbp_{ij}^{k+T}$ to be followed to reach the next meeting event;\label{dhc:line7}
\ENDFOR
\end{algorithmic}
\end{algorithm} 

\section{Numerical Experiments} \label{sec:Sim}
The proposed state estimation framework, described in Algorithm \ref{alg:dhc}, can be applied to any state estimation problem,  such as target tracking \cite{hollinger2009efficient}, estimation of spatio-temporal fields \cite{lan2016rapidly}, or state estimation in distributed parameter systems \cite{meC1,meC4,muraca2009state}, as long as the cost function $\texttt{Cost}(\bbP_{\ccalT_i})$ is additive and monotone.
In this section, we demonstrate the performance of the proposed algorithm for a target tracking problem in a non-convex environment. Specifically, we consider $N=8$ robots and $8$ targets that reside in a $10 \times 10 \times 5$m$^3$ indoor environment, shown from top in Figure \ref{fig:paths}. The targets move in this $3$D environment while avoiding the obstacles and are modeled by a linear time-invariant dynamics
$$ \bbx_a(t+1) = \bbA_a \bbx_a(t) + \bbB_a \bbu_a (t) + \bbw_a(t), $$
where $a$ denotes the target index, $\bbx_a (t) \in \mathbb{R}^3$, and $\bbw_a (t) \sim (\bb0, \bbQ_a(t) )$, where $\bbQ_a(t)\in\mathbb{S}^3_{++}$ denotes the covariance matrix of the process noise associated with $a$-th target. For simplicity, we assume that the targets follow linear or circular paths corrupted by process noise.
We also consider ground robots that live in $\Omega$, which is the $2$D projection of the space of targets. The robots are governed with the following first-order dynamics
$$ \bbp_{ij}(t+1)=\bbp_{ij}(t)+\bbu_{ij}(t), $$
where $\bbp_{ij}(t)\in\mathbb{R}^2$, $\bbu_{ij}\in\mathbb{R}^2$, and $\lVert \bbu_{ij}\rVert \leq 0.1$m. The communication range of the robots is $R=0.2$m. Observe that the communication radius is small compared to the size of the workspace. The robots need to move in $\Omega_{\text{free}}$ to estimate the $3$D positions  $\bbx_a(t)$ of all targets.

We equip the robots with omnidirectional, \range-only, line-of-sight sensors with limited range of $5$m.
Every robot can take noisy measurements of its distance from all targets that lie within its sight and range. Specifically, the measurement associated with target $a$ is given by
\begin{equation}\label{eq:meassim}
y_{ij,a} = \ell_{ij,a}(t) + v(t) \ ~\mbox{if}~ \ \ell_{ij,a}(t) \leq 5 ,
\end{equation}
where $\ell_{ij,a}(t) = ||\bbp_{ij}(t) - \bbx_a(t)||$, $v(t) \sim \ccalN(0,\sigma^2(t))$, and 
\begin{align}
\label{eq:sigma}
\sigma(t)=\left\{
                \begin{array}{ll}
                   0.01, 					&\mbox{if}~ \ell_{ij,a}(t) \leq 1,\\
                   0.045 \, \ell_{ij,a}(t) - 0.035, 	&\mbox{if}~1< \ell_{ij,a}(t) \leq 3, \\
                   0.1, 					&\mbox{if}~3 < \ell_{ij,a}(t) \leq 5 .
                \end{array}
              \right.
\end{align}
The standard deviation $\sigma(t)$ is very small, i.e., $0.01$m, when the robot $r_{ij}$ and target $a$ are less than $1$m apart and then linearly increases until it becomes flat at $0.1$m between $3$m to $5$m; see also Figure \ref{fig:sigma}. This model captures the fact that the range readings become less accurate as the distance increases and is designed to motivate the robots to approach the targets.

\begin{figure}[t]
\centering
\includegraphics[width=0.75\linewidth]{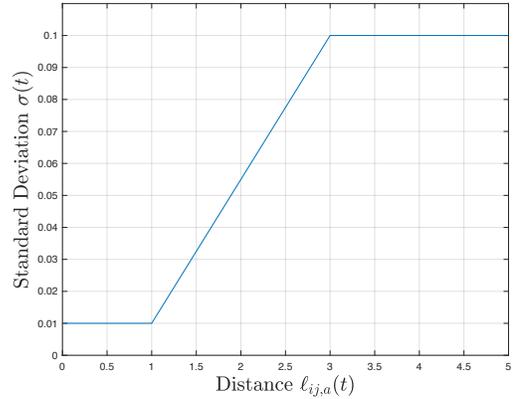}
\caption{Graphical depiction of the standard deviation $\sigma(t)$ defined in \eqref{eq:sigma}.}
\label{fig:sigma}
\end{figure}

%




The initial positions of the targets are stacked in the vector
\begin{align*}
\bbx_0 = &[(1,    2,  1.5),  (1 , 1,  1.3),  (8.5,   1.5,    0.8),  (9.2,  9,    1), \\
& (7,  8,  1.1), (5.9,   9,   1.2), (3,  9, 1.3), (1.8,    8.2,    0.9)]^T .
\end{align*}
The Extended Kalman Filter (EKF) is initialized as\footnote{Note that the EKF can be initialized arbitrarily. However, the more inaccurate the initialization is, the worse the localization performance is, as the EKF relies on the linearization of the nonlinear measurement model \eqref{eq:meassim} around the estimate $\hat{\bbx}$.}
\begin{align*}
\hat{\bbx}_0 = &[(1 ,  1,   1),  (1,   1,    1),  (9,   1,   1),  (9,  9.5,   1), \\
& (7,    9.5,    1), (4,    9,   1), (2.5,    7,    1), (1,  9,   1)]^T ,
\end{align*}
and the respective covariance matrix $\bbC(0)\in\mathbb{S}_{++}^{24}$ is initialized to be a diagonal matrix with diagonal entries equal to $0.25$m$^2$.

As mentioned in Section \ref{sec:cost}, we select the cost function as $\texttt{unc}(\bbP_{\ccalT_i}(t)) = \lambda_n(\bbC_{\ccalT_i}(t))$, where $\lambda_n(\cdot)$ denotes maximum eigenvalue and $\bbC_{\ccalT_i}\in\mathbb{S}_{++}^{24}$ is the covariance matrix computed using only local information from team $\ccalT_i$.\footnote{The dimension of $\bbC_{\ccalT_i}$ is $24$ as it pertains to $8$ targets that reside in $\mathbb{R}^3$.} In this way, we minimize the worst-case uncertainty of localizing the targets.
The goal set is constructed as in \eqref{goalset} where for the problem at hand, the constraint (iii) in \eqref{goalset} is defined as $\lambda_n(\bbC_{\ccalT_i}^{a_i}(t_{f,\ccalT_i})) \leq \delta$, in which $a_i$ is the most uncertain target from the view point of team $\ccalT_i$ and $\bbC_{\ccalT_i}^{a_i}(t)$ is the $a_i$-th diagonal block in the covariance matrix $\bbC_{\ccalT_i}$. In other words, we require each team $\ccalT_i$ to design paths so that the uncertainty of the most uncertain target, determined according to the local information of team $\ccalT_i$, drops below a threshold $\delta$.  
In what follows, we select  $\delta=0.12^2$m$^2$, i.e., team $\ccalT_i$ must localize the most uncertain target with uncertainty not worse than $0.12$m. If the team does not find a feasible path that satisfies this constraint for the given number of samples $n_{\text{sample}}$, then the constraint (iii) in the goal set \eqref{goalset} is checked for the next most uncertain targets. 

Finally, the sampling function defined in Section \ref{sec:sampFun}, is constructed as follows. First, we draw a pre-defined number of samples from a uniform distribution $f_1$ defined over $\Omega_{\text{free}}^{|\ccalT_i|}$ that allows the team $\ccalT_i$ to explore the domain. Then, we switch to a distribution $f_2$ that forces the robots to get closer and form a connected configuration. This distribution is constructed in two steps. Specifically, we first draw a possible meeting location $\bbq \in \Omega_{\text{free}}$ from a uniform distribution. Then, we construct the joint vector $\boldsymbol{\mu} = [\bbq^T,\dots,\bbq^T]^T\in\Omega_{\text{free}}^{|\ccalT_i|}$ and draw the desired samples from the following normal distribution $f_2(\bbv)=\ccalN(\boldsymbol{\mu},(2R)^2 \bbI)$.

In what follows, in Section \ref{sim1}, we examine the performance of our algorithm for two different configurations of the graph $\ccalG_{\ccalT}$. Then, in Section \ref{sim2}, we compare our proposed algorithm to an algorithm that preserves end-to-end network connectivity for all time and a heuristic approach that allows disconnection of the network, similar to the approach proposed here, but selects randomly a meeting location for every team, where the robots in that team should meet to communicate. The first comparison shows the significant advantage of the framework presented here against the extensive literature on distributed estimation \cite{DADDSN2001ND,KFIO2004SSFPJS,DRSSIA2012JASR} that requires all-time connectivity. The second comparison is meant to justify the use of the sampling-based Algorithm \ref{alg:RRT} to solve the planning problem \eqref{eq:optimProb} instead of a simple heuristic.

\subsection{Comparative Results for Different Graphs $\ccalG_\ccalT$}\label{sim1}
First, we assume that the network of $N=8$ robots is divided into $M=8$ teams defined as $\ccalT_1=\{r_{12},r_{18}\}$, $\ccalT_2=\{r_{12},r_{23}\}$, $\ccalT_3=\{r_{23}, r_{34}\}$, $\ccalT_4=\{r_{14},r_{45}\}$, $\ccalT_5=\{r_{56},r_{45}\}$, $\ccalT_6=\{r_{56},r_{67}\}$, $\ccalT_7=\{r_{67},r_{78}\}$, $\ccalT_8=\{r_{18},r_{78}\}$. The resulting graph, denoted by $\ccalG_{\ccalT}^1$, is shown in Figure \ref{fig:GT1} and is connected. Given the decomposition of the network into $M=8$ teams, we construct the following communication schedules $\texttt{sched}_{ij}$ for all robots $r_{ij}$, as discussed in Section \ref{sec:DP}:

\begin{align*} 
\left[\begin{array}{c}
\texttt{sched}_{12}\\
\texttt{sched}_{23}\\
\texttt{sched}_{34}\\
\texttt{sched}_{45}\\
\texttt{sched}_{56}\\
\texttt{sched}_{67}\\
\texttt{sched}_{78}\\
\texttt{sched}_{81}
\end{array}\right]=&
\left[\begin{array}{cccccc}
     1     &  2    \\
     3   &  2    \\
     3    &  4    \\
     5    &  4    \\
     5    &  6    \\
     7    &  6    \\
     7    &  8   \\
     1     &  8   
\end{array}\right]^{\omega}.
\end{align*}

Notice that the period of these schedules is $T=2$. At epoch $k=1$ the robots in teams $\ccalT_1$, $\ccalT_3$, $\ccalT_5$, $\ccalT_5$, $\ccalT_7$ communicate and decide on their next paths so that they re-connect after $T=2$ epochs, i.e., at epoch $k=3$. 

For comparison, we also consider a second denser graph $\ccalG_{\ccalT}^2$ with $M=5$ teams which is shown in Figure \ref{fig:GT2}. In this case, the robots are organized into the following teams, $\ccalT_1=\{r_{12}, r_{14}, r_{15} \}$, $\ccalT_2=\{r_{12}, r_{23}, r_{25} \}$, $\ccalT_3=\{r_{23}, r_{34}, r_{35} \}$, $ \ccalT_4=\{r_{14}, r_{34}, r_{45}\}$, and $\ccalT_5=\{r_{15}, r_{25}, r_{35}, r_{45} \}$, and the communication schedules $\texttt{sched}_{ij}$ have the following form:

\begin{align*} 
\left[\begin{array}{c}
\texttt{sched}_{12}\\
\texttt{sched}_{23}\\
\texttt{sched}_{34}\\
\texttt{sched}_{14}\\
\texttt{sched}_{15}\\
\texttt{sched}_{25}\\
\texttt{sched}_{35}\\
\texttt{sched}_{45}
\end{array}\right]=
&\left[\begin{array}{cccccc}
    1  &  2& X   \\
    3  &  2  & X  \\
    3 &  4 & X  \\
    1  &  4  & X  \\
    1  &  X & 5    \\
    X & 2 &  5    \\
    3 & X &  5    \\
    X & 4 & 5    
\end{array}\right]^{\omega}.
\end{align*}



\begin{figure}[t]
\centering
\includegraphics[width=1\linewidth]{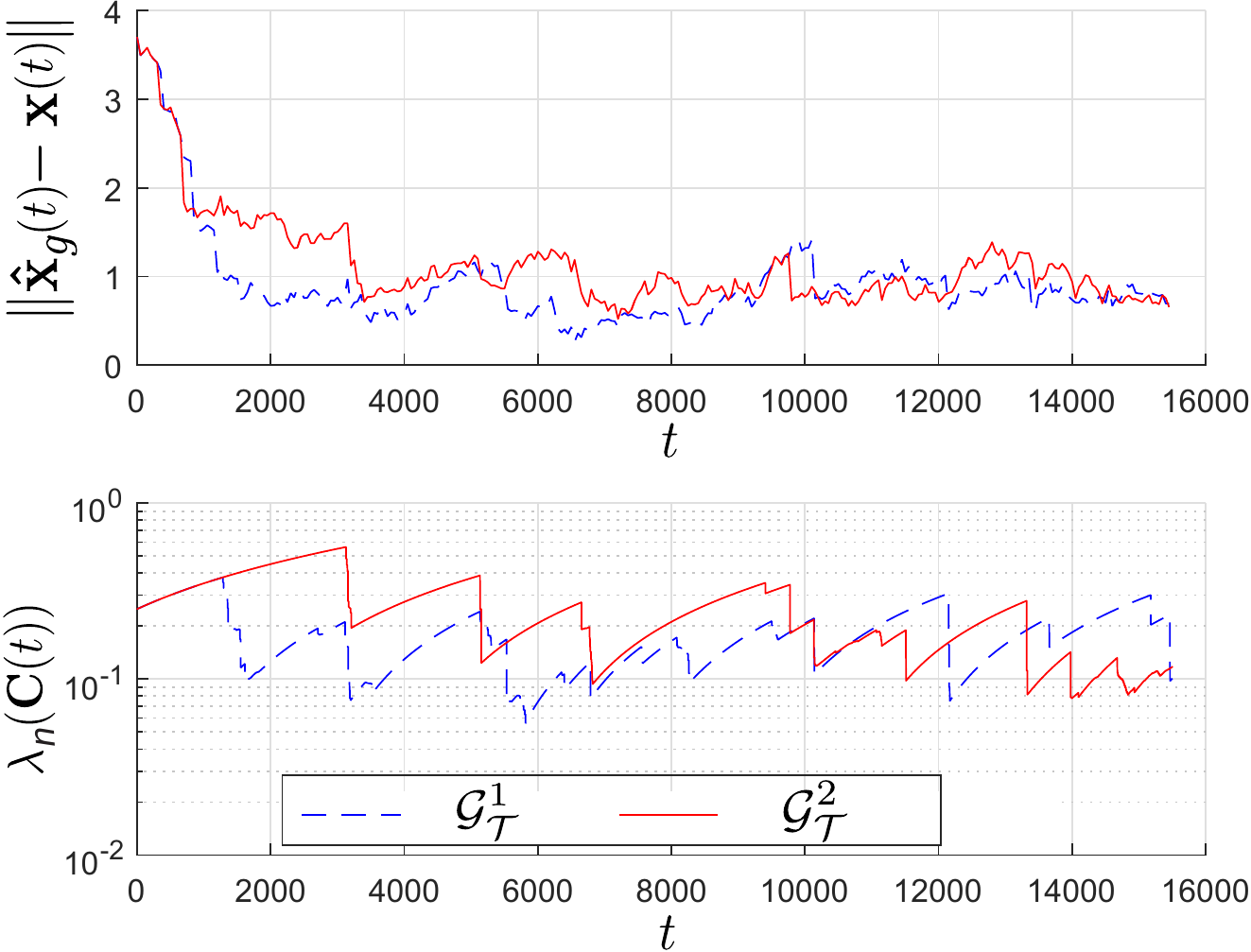}
\caption{Graphical depiction of the localization error $e_{\ell}(t)=\lVert \hat{\bbx}_{g}(t) - \bbx(t)\rVert$, and the uncertainty metric $\lambda_{n}(\bbC(t))$ with respect to time $t$ considering the graph topologies $\ccalG_{\ccalT}^1$ and $\ccalG_{\ccalT}^2$.}
\label{fig:compTop}
\end{figure}

The comparative results for these graphs are depicted in Figure \ref{fig:compTop}. Specifically, Figure \ref{fig:compTop} shows the evolution of the localization error $e_{\ell}(t)=\lVert \hat{\bbx}_{g}(t) - \bbx(t) \rVert$, and the uncertainty metric $\lambda_{n}(\bbC(t))$ as a function of discrete time $t$. In the error metric $e_{\ell}(t)$, $\bbx(t)$ is a vector that stacks the true positions of all targets and $\hat{\bbx}_{g}(t)$ stands for the estimate of $\bbx(t)$ and
corresponding covariance matrix $\bbC(t)$ assuming global fusion of all measurements taken by all robots at time $t$. 
Observe in Figure \ref{fig:compTop}, that the proposed algorithm can better estimate the positions of the targets, when the teams are constructed as in $\ccalG_{\ccalT}^1$. The reason is that the communication schedules $\texttt{sched}_{ij}$ in $\ccalG_{\ccalT}^1$ allow more teams to visit multiple targets at the same time compared to when the teams are determined according to $\ccalG_{\ccalT}^2$. Note that $\hat{\bbx}_g(t)$ is a fictitious estimate, as it is based on global fusion of measurements taken by all robots. However, as it will be discussed later, for any time instant $t$ there exists a time instant $t^*(t)<t$ up to which  the local estimate, computed using only local information available to team $\ccalT_i$, matches exactly the global estimate $\hat{\bbx}_g(t)$. Thus, the results in Figure \ref{fig:compTop} imply that the local estimates up to the time instant $t^*(t)$ are better when the teams are constructed as in $\ccalG_{\ccalT}^1$.
Define the average localization error and uncertainty as
\begin{subequations} \label{eq:avgErr}
\begin{align}
\bar{e}_{\ell} &= \frac{1}{t_{\text{end}}} \sum_{t=0}^{t_{\text{end}}}\lVert \hat{\bbx}_{g}(t) - \bbx(t) \rVert , \\
\bar{\lambda} &= \frac{1}{t_{\text{end}}} \sum_{t=0}^{t_{\text{end}}} \lambda_n(\bbC(t)),
\end{align}
\end{subequations}
where $t_{\text{end}}$ is the total number of discrete time steps $t$.
Then, when the teams are constructed as in $\ccalG_{\ccalT}^1$, $\bar{e}_{\ell} = 0.929$m and $\bar{\lambda} = 0.177$m$^2$, respectively. On the other hand,  when the teams are constructed as in $\ccalG_{\ccalT}^2$, the average error and average uncertainty are $\bar{e}_{\ell} = 1.127$m and $\bar{\lambda} = 0.532$m$^2$, respectively.

\begin{figure}[t]
\centering
\includegraphics[width=1\linewidth]{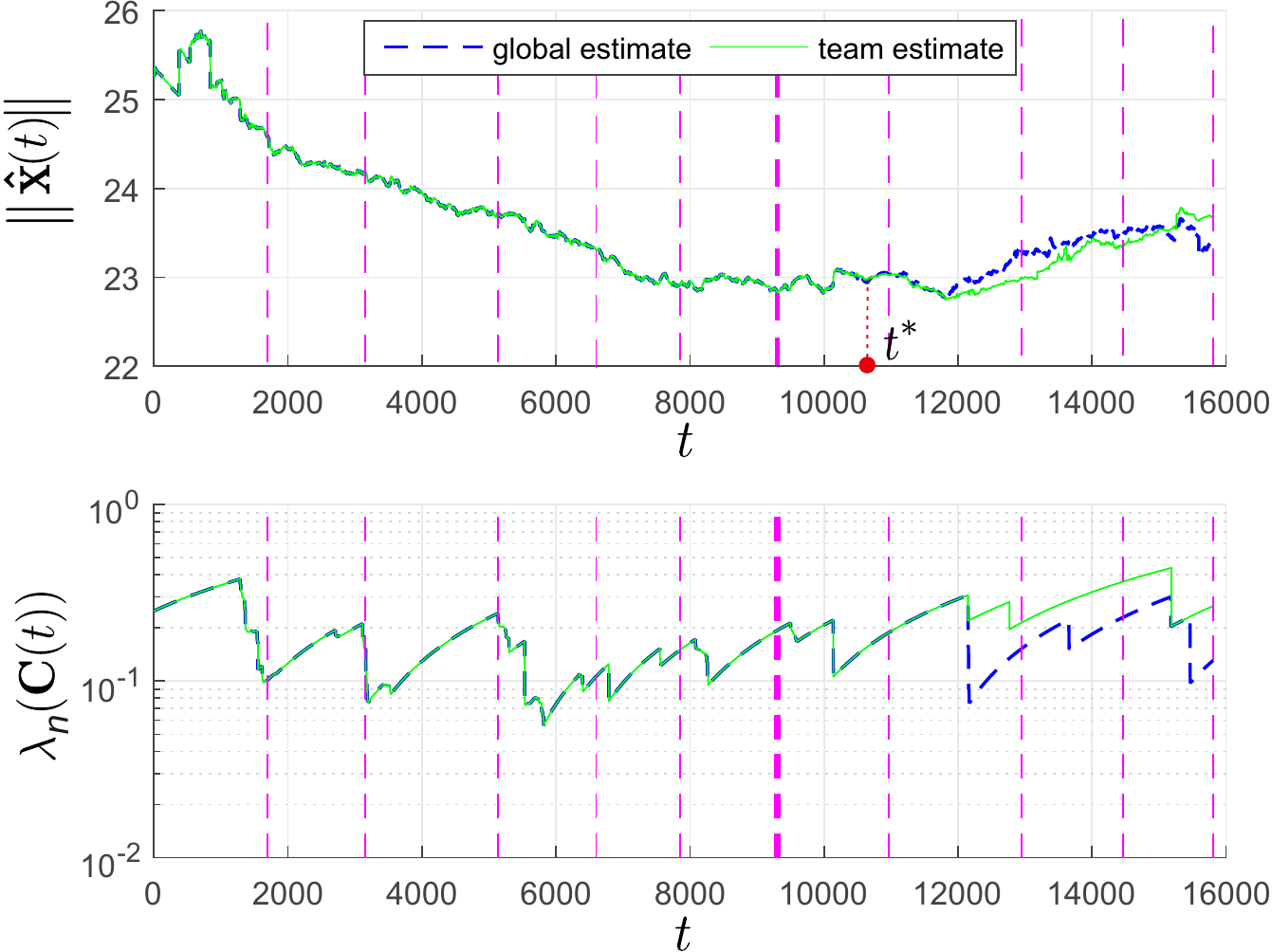}
\caption{Comparison of the evolution of the estimates $\hat{\bbx}_{g}(t)$, $\hat{\bbx}_{\ccalT_2}(t)$, and the uncertainty metric $\lambda_n(\bbC(t))$ when the teams are organized according to $\ccalG_{\ccalT}^1$. Each purple dashed line is associated with an epoch. Specifically, the $k$-th purple line corresponds to the time instant $t$ at which all robots in $\ccalT_2$ have reached the $k$-th epoch. The thick purple dashed line denotes the epoch up to which the evolution of the team estimate $\hat{\bbx}_{\ccalT_2}(t)$ matches exactly the evolution of $\hat{\bbx}_{g}(t)$ and corresponds to the lower bound provided in Proposition \ref{prop:kstar}.}
\label{fig:locFus}
\end{figure}

Next, we compare these two team topologies in terms of the delays they introduce in propagating data across teams. To this end, we define the following metric that encapsulates the average delay with which team $\ccalT_i$ receives information from all other teams:
\begin{equation}\label{eq:cons}
\bar{e}_d^i(t)=\frac{1}{t-t^*}\sum_{t^*}^t \lVert \hat{\bbx}_{g}(t)-\hat{\bbx}_{\ccalT_i}(t) \rVert. 
\end{equation}
In \eqref{eq:cons}, $t^*(t)$ stands for the time instant at which the global state estimate $\hat{\bbx}_{g}(t)$, and the local state estimate $\hat{\bbx}_{\ccalT_i}(t)$, after fusing measurements of team $\ccalT_i$, differ for the first time.\footnote{Note that the time instant $t^*$ depends on the considered team and epoch. Furthermore, this value does not necessarily coincide with any epoch time and is often larger than the lower bound provided in Proposition \ref{prop:kstar}.} 
For instance, this time instance $t^*(t)$ is illustrated in Figure \ref{fig:locFus} for team $\ccalT_2$ of $\ccalG_T^1$. Observe that the evolution of the estimate $\hat{\bbx}_{\ccalT_2}(t)$ matches exactly the evolution of $\hat{\bbx}_{g}(t)$ up to $4$ epochs earlier than the last epoch, as expected from Proposition \ref{prop:kstar}. Specifically, according to Proposition \ref{prop:kstar}, we have that $D_{\ccalG^1_\ccalT} = (2-1) \times 5 = 5$ which means that at any epoch $k$ all teams will fuse all measurements taken by the robots until epoch $k-5$. As a result, $\hat{\bbx}_{\ccalT_i}(t)$ at epoch $k$ should coincide with $\hat{\bbx}_{g}(t)$ at least up to epoch $k-5$.

Recall that the discrepancy between $\hat{\bbx}_{g}(t)$ and $\hat{\bbx}_{\ccalT_i}(t)$ is due to the fact that the network gets disconnected and, therefore, measurements taken by team $\ccalT_i$ are not communicated to team  $\ccalT_j$ instantaneously. The evolution of the error $\bbare_d^i(t)$ is depicted in Figure \ref{fig:compCons} for team $\ccalT_2$ for both team structures. Notice that $\bbare_d^2(t)$ is always larger for $\ccalG_T^1$, as expected due to Proposition \ref{prop:kstar}. Specifically, the delay for the graph $\ccalG_T^2$ is $D_{\ccalG^2_\ccalT} = (3-1) \times 2 = 4$ and is smaller than $D_{\ccalG^1_\ccalT} = 5$.
To further illustrate this observation, in Figure \ref{fig:compCons}, we also plot for team $\ccalT_2$ the value of $t^*$ for time instances at which communication within that team happens. Note that the time interval $t^*(t)-t$ is always larger when the teams are organized as in $\ccalG_T^1$. We conclude that the selection of the the graph $\ccalG_{\ccalT}$ depends on the specific problem and should achieve a balance between less uncertainty and smaller delays.

\begin{figure}[t]
\centering
\includegraphics[width=1\linewidth]{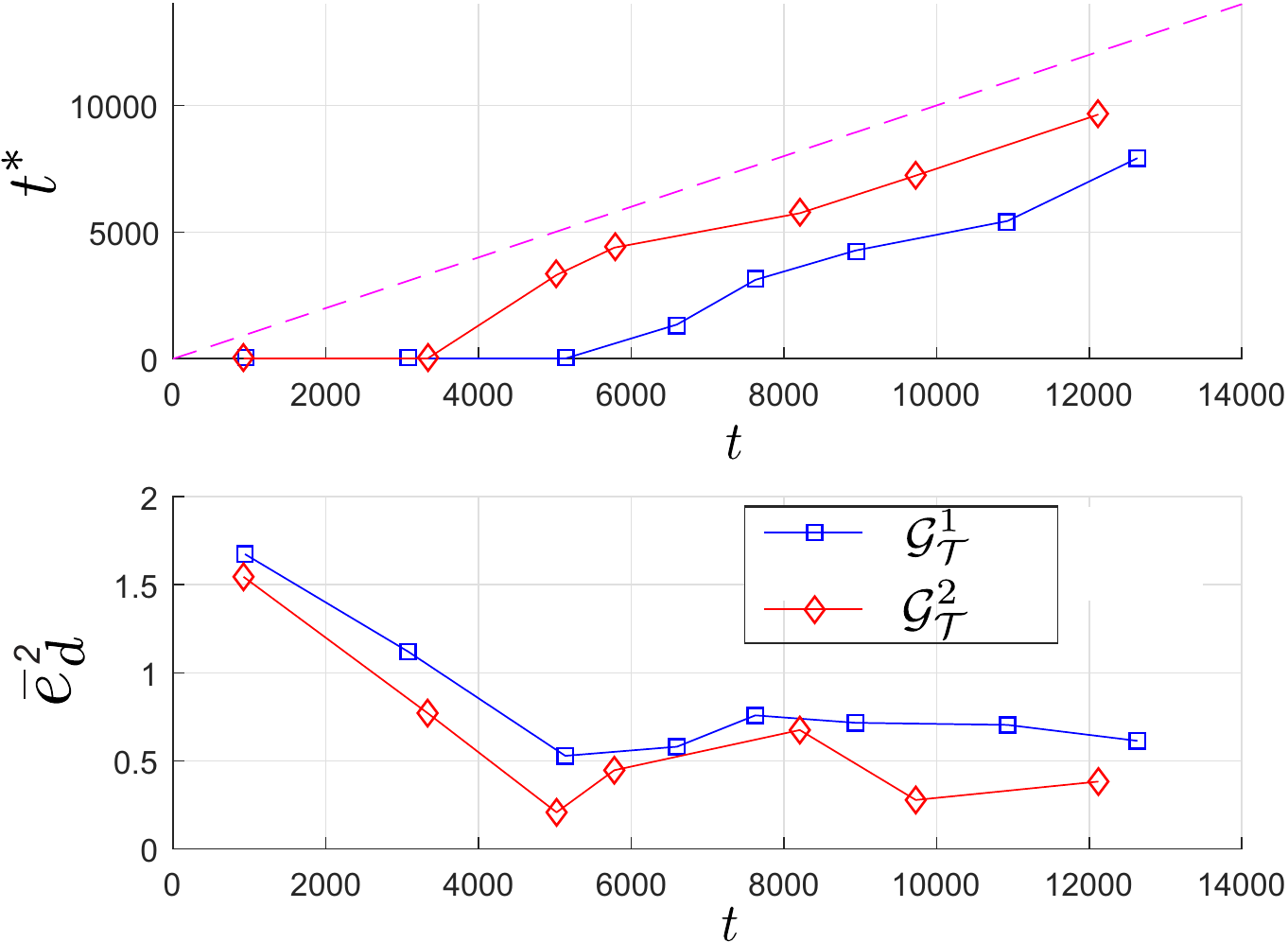}
\caption{Graphical depiction of the time instant $t^*(t)$ and the error $\bbare_d^2(t)$ as a function of time $t$ for the graphs $\ccalG_{\ccalT}^1$ and $\ccalG_{\ccalT}^2$. The red diamonds and blue squares correspond to communication events within team $\ccalT_2$ for the graphs $\ccalG_{\ccalT}^1$ and $\ccalG_{\ccalT}^2$, respectively. The dashed line in the top plot refers to the ideal case when $t^*(t)=t$, i.e., when there is no delay in communication. Thus, the closer the curve $t^*(t)$ is to the dashed line, the smaller the delays are.}
\label{fig:compCons}
\end{figure}

In Figure \ref{fig:paths}, we plot the paths that teams $\ccalT_4$ and $\ccalT_3$ in graph $\ccalG_{\ccalT}^1$ designed at epochs $k=4$ and  $k=5$, respectively. Recall, that $r_{34}\in\ccalT_3\cap\ccalT_4$. At $k=4$, robots of team $\ccalT_4$ design paths to decrease the uncertainty of target 3. However, observe in Figure \ref{fig:k4} that the robots, at the end of their paths, cannot sense target 3 due to an obstacle. Therefore, the uncertainty of that target will not decrease when the robots travel along these paths. The reason for this behavior is that the paths are designed based on the predicted state, given the filter initialization and available local information, and not the measurements. These paths will lead to a connected graph for team $\ccalT_4$ at epoch $4+T=6$. Observe also in Figure \ref{fig:k5} that the path for robot $r_{34}\in\ccalT_3\cap\ccalT_4$ starts at the end of its path designed at the previous epoch $k=4$, by construction of Algorithm \ref{alg:RRT}. A video simulation of the trajectories of the robots and targets for the graph $\ccalG_{\ccalT}^1$, and the sequence of meeting times and locations can be found in \cite{DSEvideo}.

\begin{figure}[t]
  \centering
     \subfigure[$k=4$]{
    \label{fig:k4}
  \includegraphics[width=0.85\linewidth]{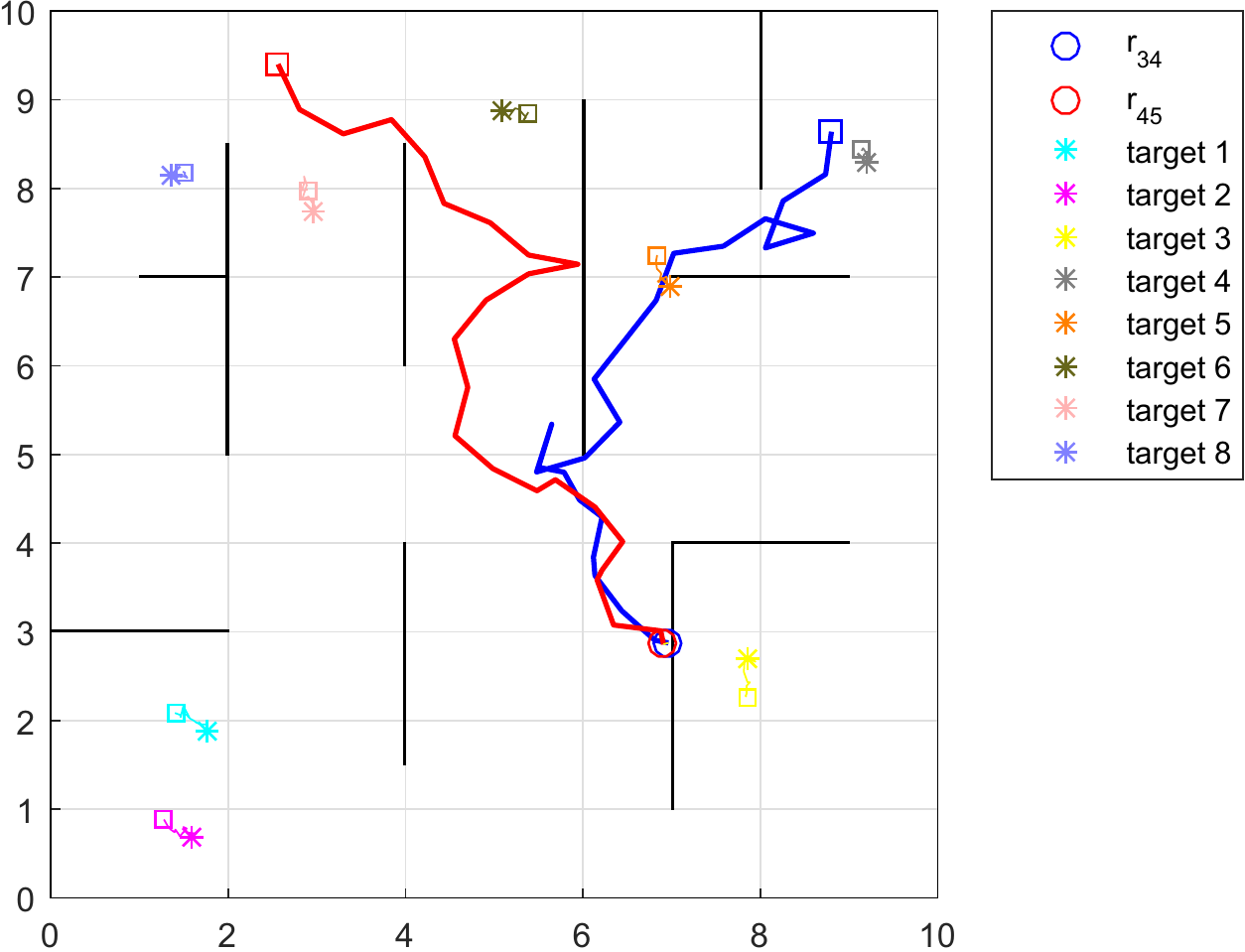}}
  \subfigure[$k=5$]{
    \label{fig:k5}
  \includegraphics[width=0.85\linewidth]{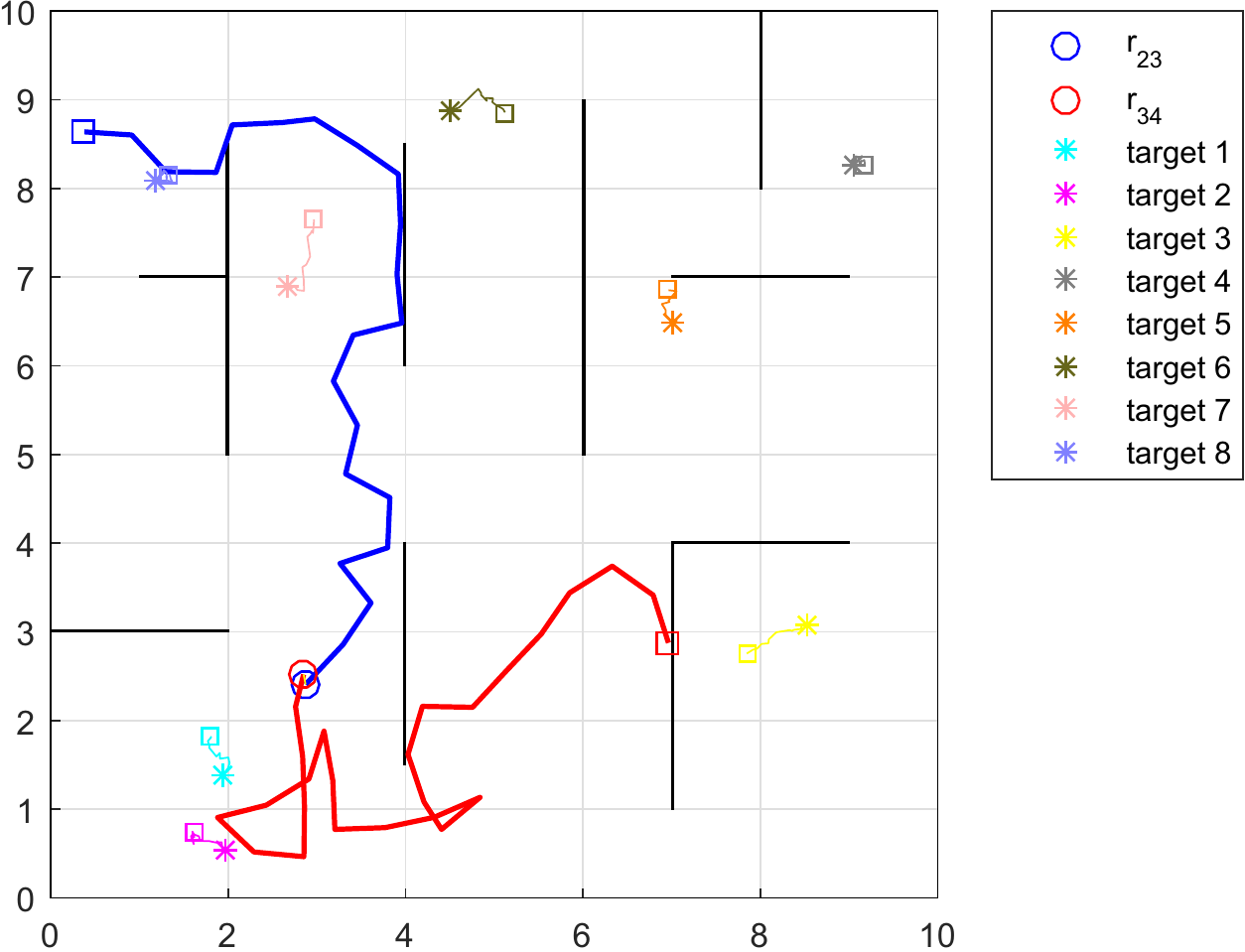}}
  \caption{Figures \ref{fig:k4} and \ref{fig:k5} depict the paths of teams $\ccalT_4$ and $\ccalT_3$, from $\ccalG_{\ccalT}^1$, designed at epochs $k=4$ and  $k=5$, and the corresponding paths of the targets during the time that the robots traverse these path segments. In these figures, the black straight lines show the obstacles and the squares indicate the beginning of the paths. Moreover, the circles and stars correspond to the end of the path segments for the robots and targets, respectively.}
  \label{fig:paths}
\end{figure} 

  
%


\subsection{Comparison with Alternative Approaches}\label{sim2}
In this section, we compare the proposed Algorithm \ref{alg:dhc} to an algorithm that preserves network connectivity for all time
and an intermittent connectivity heuristic that employs the same decomposition of robots into teams, but selects random meeting locations for each team. In what follows, we employ a team decomposition as in $\ccalG_{\ccalT}^1$. 
%
Specifically, for the connectivity preserving approach, we assume that the network of $N=8$ robots maintains a fixed connected configuration throughout the whole experiment. Then, to design informative paths for this network, we apply the $\text{RRT}^*$ algorithm to design paths for the geometric center of this configuration, so that the uncertainty of the most uncertain target drops below a threshold $\delta$. Once the connected network travels along the resulting path, the $\text{RRT}^*$ algorithm is executed again to find the next informative path. Notice that the paths constructed for the all-time connected network are asymptotically optimal.
%
On the other hand, for the heuristic approach the path segments $\bbp_{ij}^k$ are selected to be the geodesic paths that connect the initial locations to the randomly selected meeting location. The geodesic paths are computed using the toolbox in \cite{VisiLibity:08}.
When traveling along the paths designed by this heuristic, the time constraint is not satisfied and the robots need to wait at their meeting locations for their team members.

Figure \ref{fig:comp} shows the evolution of the localization error $e_{\ell}(t)=\lVert \hat{\bbx}_{g}(t) - \bbx(t)\rVert$, and the uncertainty metric $\lambda_{n}(\bbC(t))$ as a function of time $t$ for both approaches compared to our method. Notice that our method outperforms the algorithm that requires all robots to remain connected for all time. The reason is that our proposed algorithm allows the robots to disconnect in order to visit multiple targets simultaneously, which cannot happen using an all-time connectivity algorithm. Observe also that our algorithm performs better than the heuristic approach which justifies the use of a sampling-based algorithm to solve \eqref{eq:optimProb} instead of a computationally inexpensive method that picks random meeting locations connected to each other through geodesic paths.

The heuristic approach performs better in Figure \ref{fig:comp} than the algorithm that enforces end-to-end network connectivity for all time; this is not always the case.
To elaborate more, we compare the performance of the algorithms for $4$ and $8$ targets in Table \ref{tab:comp} in terms of the average localization error and the average uncertainty defined in \eqref{eq:avgErr}. 
Observe that as we decrease the number of targets, the all-time connectivity algorithm performs better than the heuristic approach. The reason for this is that the all-time connectivity approach forces the network of robots to visit the targets sequentially. Thus, for larger number of targets it takes longer to revisit a specific target and this results in uncertainty spikes. On the other hand, the heuristic approach selects the meeting locations randomly. Thus, as we populate the domain with targets, the paths designed by the heuristic cross nearby a target with larger probability which improves its performance. Our proposed algorithm outperforms both approaches regardless of the number of targets. In Table \ref{tab:comp}, for the case of 4 targets, we considered a network with $N=4$ robots divided in the following teams $\ccalT_1=\set{r_{12}, r_{14}}$, $\ccalT_2=\set{r_{12}, r_{23}}$, $\ccalT_3=\set{r_{23}, r_{34}}$, and $\ccalT_4=\set{r_{34}, r_{14}}$.


\begin{figure}[t]
\centering
\includegraphics[width=1\linewidth]{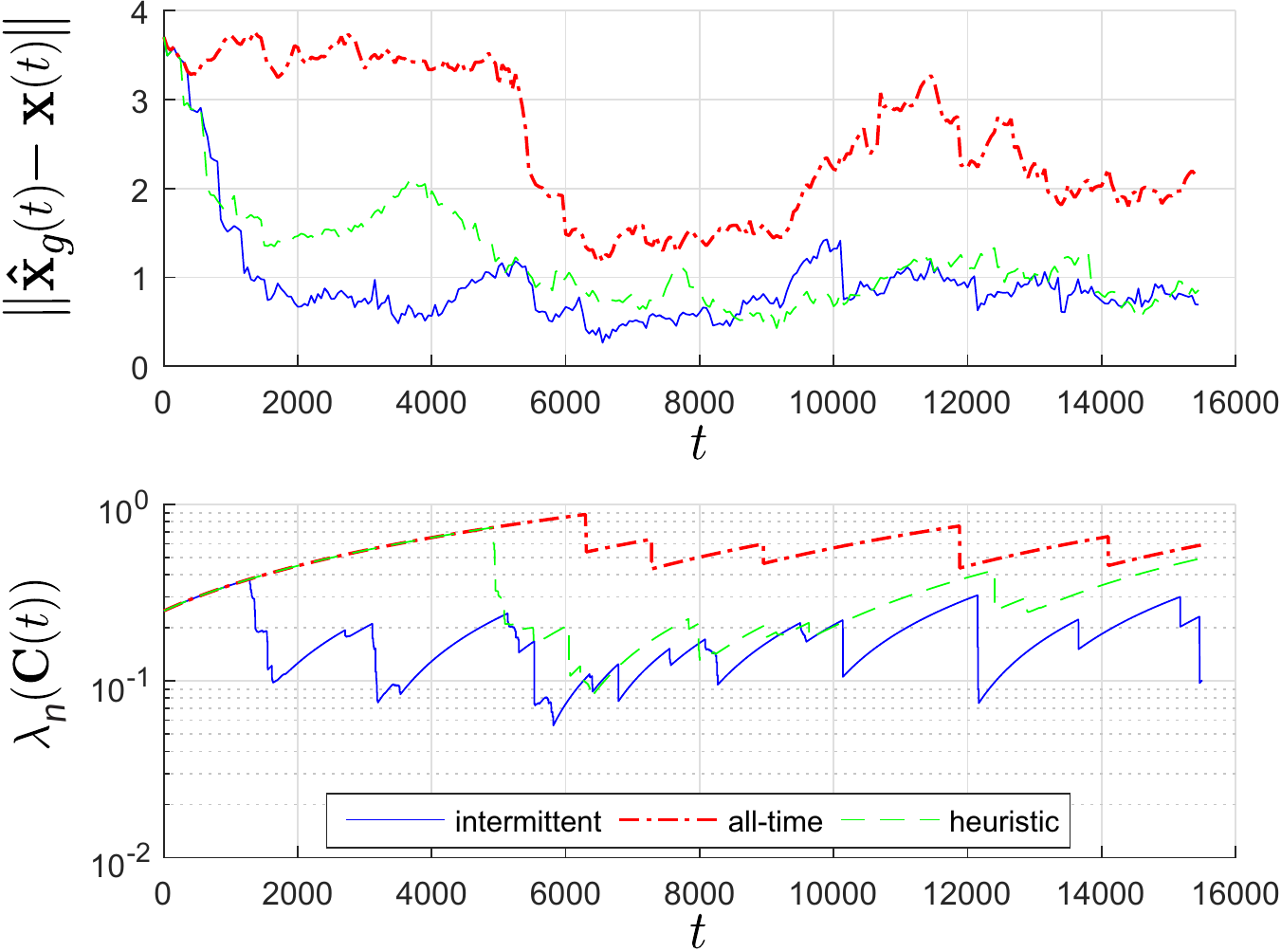}
\caption{Comparison of the evolution of the localization error $e_{\ell}(t)=\lVert \hat{\bbx}_{g}(t)- \hat{\bbx}(t) \rVert$ and the uncertainty metric $\lambda_n(\bbC(t))$ considering teams organized as per team $\ccalG_{\ccalT}^1$, between our proposed control framework, an all-time connectivity algorithm, and a heuristic approach.}
\label{fig:comp}
\end{figure}

%
%

\begin{table*}[t]
\caption{Comparison with Alternative Approaches}
\label{tab:comp}
\begin{center}
\renewcommand{\arraystretch}{1.5}
\begin{tabular}{c|c|c|c||c|c|c|}
	\cline{2-7}
    	\multirow{2}{*}{}		&	\multicolumn{3}{c||}{$4$ Targets}						&	\multicolumn{3}{c|}{$8$ Targets}					\\ \cline{2-7}
					&	all-time 		&	heuristic		& 	intermittent		&	all-time 		&	heuristic		& 	intermittent 	\\  \hline
					
	\multicolumn{1}{|c||}{$\bar{e}_{\ell}$ (m)}		&
					1.030 $\pm$ 0.466	& 1.993 $\pm$ 0.566	& 0.879 $\pm$ 0.461		& 2.517 $\pm$ 0.810 & 1.201 $\pm$ 0.570 & 0.929 $\pm$ 0.561 \\ \hline
						
	\multicolumn{1}{|c||}{$\bar{\lambda}$ (m$^2$) }			&
					0.226 $\pm$ 0.061	& 0.405 $\pm$ 0.181	& 0.167 $\pm$ 0.084		& 0.563 $\pm$ 0.129	 & 0.334 $\pm$ 0.161 & 0.177 $\pm$ 0.068 \\ \hline
\end{tabular}
\end{center}
\end{table*}

Finally observe that the local estimation results for team $\ccalT_2$ in Figure \ref{fig:locFus} are still considerably better than the results of competitive methods shown in Figure \ref{fig:comp}. Particularly, for $8$ targets the numerical values for $\ccalT_2$ are $\bar{e}_{\ell} =1.044 \pm 0.628$m and $\bar{\lambda} = 0.177 \pm 0.071$m$^2$, respectively; compare with Table \ref{tab:comp}. This is important since in practice, a user can have access to the information of an individual robot and not the whole network, as this would defeat the purpose of assuming robots with limited communication range. In a similar scenario in the case of all-time connectivity,  the whole network needs to be recalled to an access point to collect information from the robots, which would disrupt the estimation task.

\section{Conclusion} \label{sec:Concl}
In this paper, we considered the problem of distributed state estimation using intermittently connected mobile robot networks. Robots were assumed to have limited communication capabilities and, therefore, they exchanged their information intermittently only when they were sufficiently close to each other. To the best of our knowledge, this is the first framework for concurrent state estimation and intermittent communication control that does not rely on any network connectivity assumption and can be applied to large-scale networks. We presented simulation results that demonstrated significant improvement in estimation accuracy compared to methods that maintain an end-to-end connected network for all time. The improvement becomes more considerable as information becomes localized and sparse and the communication and sensing ranges of the robots become smaller compared to the size of the domain.

\appendices
\section{Steering function}\label{sec:steer}

In this section we discuss the details of the steering function, explained in Section \ref{sec:steer1}, for robots with single-integrator dynamics. Define $\texttt{delay(\bbv)}=\max_{r_{ij}\in\ccalT_i}t_{ij}(\bbv)-\min_{r_{ij}\in\ccalT_i}t_{ij}(\bbv)$.
Then, the goal of the steering function described in Algorithm \ref{alg:steer} is to match the time instants $t_{ij}(\bbv_{\text{new}})$ that the robots $r_{ij}\in\ccalT_i$ arrive at $\bbv_{\text{new}}$ so that $\texttt{delay}(\bbv_{\text{new}})=0$. 
This can be accomplished by adjusting the velocities or the step-sizes of the robots. Since we are interested in estimation of dynamic fields, allowing the robots to move with maximum speed results in more optimal measurements. Thus, here we adjust the step-sizes to match the times of the robots. Once the times are matched, the robots take steps with equal length to maintain zero delay. To this end, in [line \ref{steer:line1}, Alg. \ref{alg:steer}] we project the vector $\bbv_{\text{rand}}-\bbv_{\text{nearest}}$ onto the space of every robot $r_{ij}\in\ccalT_i$. We denote this projection by $\bbd_{ij}$. 
Let $\bbarepsilon>0$ be a user-specified parameter that determines the maximum step-size of the robots.
Then, in [line \ref{steer:line2}, Alg. \ref{alg:steer}] we compute the smallest step size $s$ of the robots as $s=\min_{r_{ij}\in\ccalT_i} \left( \bbarepsilon , ||\bbd_{ij} || \right)$. Given the step size $s$, in [line \ref{steer:line3}, Alg. \ref{alg:steer}] we compute the time interval $\Delta t_s$ required to travel distance $s$ with maximum velocity $u_{\max}$, i.e., $\Delta t_s = {s}/{ u_{\max} }$. 
Then, in [line \ref{steer:line5}, Alg. \ref{alg:steer}] we compute $\Delta t_{c,ij} = \Delta t_s- (t_{ij}(\bbv_{\text{nearest}})-\min_{r_{ij}\in\ccalT_i}t_{ij}(\bbv_{\text{nearest}}))$.
This quantity captures the expected delay of robot $r_{ij}$ at the new sample.
If $\Delta t_{c,ij}>0$, then the projection of $\bbv_\text{new}$ to the space of robot $r_{ij}$, denoted by $\bbv_{\text{new},ij}$, is computed as $\bbv_{\text{new},ij} = \bbv_{\text{nearest},ij} + u_{\max}  \Delta t_{c,ij} {\bbd_{ij}} / {\lVert \bbd_{ij}\rVert} $ and $t_{ij}(\bbv_{\text{new}})=t_{ij}(\bbv_{\text{nearest}})+\Delta t_{c,ij} $. Note that the robot $r_{ij}$ associated with the time instant $\min_{r_{ij}\in\ccalT_i}t_{ij}(\bbv_{\text{nearest}})$ will always move with step size of $s$, as it trivially satisfies $\Delta t_{c,ij}>0$. If $\Delta t_{c,ij}\leq0$, the robot $r_{ij}$ does not move, i.e., $\bbv_{\text{new},ij} = \bbv_{\text{nearest},ij}$ and  $t_{ij}(\bbv_{\text{new}})=t_{ij}(\bbv_{\text{nearest}})$ [lines \ref{steer:line6}-\ref{steer:line9}, Alg. \ref{alg:steer}]. 
This ensures that $\texttt{delay}(\bbv_{\text{new}})\leq\texttt{delay}(\bbv_{\text{nearest}})$ and, as the tree grows and the length of its branches increases, nodes $\bbv\in\Omega_{\text{free}}^{|\ccalT_i|}$ will eventually be added to the tree so that $\texttt{delay}(\bbv)=0$. 

\begin{algorithm}[t] 
\caption{\texttt{Steer}}
\label{alg:steer}
\begin{algorithmic}[1]
\REQUIRE Nodes $\bbv_{\text{nearest}}$ and $\bbv_{\text{rand}}$;
\STATE Compute vector $\bbd_{ij}$ by projecting $\bbv_{\text{rand}}-\bbv_{\text{nearest}}$ onto the space of  robot $r_{ij}\in\ccalT_i$; \label{steer:line1}
\STATE Compute $s=\min_{r_{ij}\in\ccalT_i} \left( \bbarepsilon , ||\bbd_{ij} || \right)$; \label{steer:line2}
\STATE Compute travel time $\Delta t_s = {s}/{ u_{\max} }$; \label{steer:line3}
\STATE Compute the expected delay for each robot: \\$\Delta t_{c,ij}=\Delta t_s- (t_{ij}(\bbv_{\text{nearest}})-\min_{r_{ij}\in\ccalT_i}t_{ij}(\bbv_{\text{nearest}}))$; \label{steer:line5}

		\IF {$\Delta t_{c,ij} > 0$}	\label{steer:line6}
			\STATE $\bbv_{\text{new},ij}=\bbv_{\text{nearest},ij} + u_{\max} \Delta t_{c,ij} {\bbd_{ij}}/{\lVert \bbd_{ij}\rVert} $;	\label{steer:line7}
	    \ELSE
			\STATE $\bbv_{\text{new},ij}=\bbv_{\text{nearest},ij}$; \hspace{2mm} \small{\texttt{\% robot $r_{ij}$ does not move}}	\label{steer:line9}
		\ENDIF

\end{algorithmic}
\end{algorithm} 

\bibliographystyle{IEEEtran}
\bibliography{MyBibliography,YK_bib}

\begin{thebibliography}{10}
\providecommand{\url}[1]{#1}
\csname url@samestyle\endcsname
\providecommand{\newblock}{\relax}
\providecommand{\bibinfo}[2]{#2}
\providecommand{\BIBentrySTDinterwordspacing}{\spaceskip=0pt\relax}
\providecommand{\BIBentryALTinterwordstretchfactor}{4}
\providecommand{\BIBentryALTinterwordspacing}{\spaceskip=\fontdimen2\font plus
\BIBentryALTinterwordstretchfactor\fontdimen3\font minus
  \fontdimen4\font\relax}
\providecommand{\BIBforeignlanguage}[2]{{%
\expandafter\ifx\csname l@#1\endcsname\relax
\typeout{** WARNING: IEEEtran.bst: No hyphenation pattern has been}%
\typeout{** loaded for the language `#1'. Using the pattern for}%
\typeout{** the default language instead.}%
\else
\language=\csname l@#1\endcsname
\fi
#2}}
\providecommand{\BIBdecl}{\relax}
\BIBdecl

\bibitem{PACML2000FBKT}
D.~Fox, W.~Burgard, H.~Kruppa, and S.~Thrun, ``A probabilistic approach to
  collaborative multi-robot localization,'' \emph{Autonomous Robots}, vol.~8,
  no.~3, pp. 325--344, 2000.

\bibitem{DML2002RB}
S.~I. Roumeliotis and G.~A. Bekey, ``Distributed multirobot localization,''
  \emph{{IEEE} Transactions on Robotics and Automation}, vol.~18, no.~5, pp.
  781--795, 2002.

\bibitem{SLAM2006DB}
H.~Durrant-Whyte and T.~Bailey, ``Simultaneous localization and mapping: Part
  i,'' \emph{{IEEE} Transactions on Robotics and Automation}, vol.~13, no.~2,
  pp. 99--110, 2006.

\bibitem{nerurkar2014c}
E.~D. Nerurkar, K.~J. Wu, and S.~I. Roumeliotis, ``C-klam: Constrained
  keyframe-based localization and mapping,'' in \emph{IEEE International
  Conference on Robotics and Automation}, Hong Kong, China, May-June 2014, pp.
  3638--3643.

\bibitem{DAIA2015ALDP}
N.~Atanasov, J.~Le~Ny, K.~Daniilidis, and G.~J. Pappas, ``Decentralized active
  information acquisition: theory and application to multi-robot slam,'' in
  \emph{Proceedings of {IEEE} International Conference on Robotics and
  Automation}.\hskip 1em plus 0.5em minus 0.4em\relax IEEE, 2015, pp.
  4775--4782.

\bibitem{leung2012decentralized}
K.~Y. Leung, T.~D. Barfoot, and H.~H. Liu, ``Decentralized cooperative slam for
  sparsely-communicating robot networks: A centralized-equivalent approach,''
  \emph{Journal of Intelligent \& Robotic Systems}, vol.~66, no.~3, pp.
  321--342, 2012.

\bibitem{ISDSCMSN2012JO}
P.~Jalalkamali and R.~Olfati-Saber, ``Information-driven self-deployment and
  dynamic sensor coverage for mobile sensor networks,'' in \emph{Proceedings of
  American Control Conference}.\hskip 1em plus 0.5em minus 0.4em\relax IEEE,
  2012, pp. 4933--4938.

\bibitem{vander2015algorithms}
J.~Vander~Hook, P.~Tokekar, and V.~Isler, ``Algorithms for cooperative active
  localization of static targets with mobile bearing sensors under
  communication constraints,'' \emph{IEEE Transactions on Robotics}, vol.~31,
  no.~4, pp. 864--876, 2015.

\bibitem{OPPRAATL2015FMZ}
C.~Freundlich, P.~Mordohai, and M.~M. Zavlanos, ``Optimal path planning and
  resource allocation for active target localization,'' in \emph{Proceedings of
  American Control Conference}.\hskip 1em plus 0.5em minus 0.4em\relax IEEE,
  2015, pp. 3088--3093.

\bibitem{freundlich2017distributed}
C.~Freundlich, Y.~Zhang, and M.~M. Zavlanos, ``Distributed hierarchical control
  for state estimation with robotic sensor networks,'' \emph{IEEE Transactions
  on Control of Network Systems}, vol.~PP, no.~99, pp. 1--1, December 2017.

\bibitem{OSPMCTT2006MB}
S.~Mart{\'\i}nez and F.~Bullo, ``Optimal sensor placement and motion
  coordination for target tracking,'' \emph{Automatica}, vol.~42, no.~4, pp.
  661--668, 2006.

\bibitem{hollinger2009efficient}
G.~Hollinger, S.~Singh, J.~Djugash, and A.~Kehagias, ``Efficient multi-robot
  search for a moving target,'' \emph{The International Journal of Robotics
  Research}, vol.~28, no.~2, pp. 201--219, 2009.

\bibitem{DMCSDTT2006CBM}
T.~H. Chung, J.~W. Burdick, and R.~M. Murray, ``A decentralized motion
  coordination strategy for dynamic target tracking,'' in \emph{Proceedings of
  {IEEE} International Conference on Robotics and Automation}.\hskip 1em plus
  0.5em minus 0.4em\relax IEEE, 2006, pp. 2416--2422.

\bibitem{OACTTPG2009DSH}
J.~Derenick, J.~Spletzer, and A.~Hsieh, ``An optimal approach to collaborative
  target tracking with performance guarantees,'' \emph{Journal of Intelligent
  \& Robotic Systems}, vol.~56, no.~1, pp. 47--67, 2009.

\bibitem{DTMSNIM2007O}
R.~Olfati-Saber, ``Distributed tracking for mobile sensor networks with
  information-driven mobility,'' in \emph{Proceedings of American Control
  Conference}.\hskip 1em plus 0.5em minus 0.4em\relax IEEE, 2007, pp.
  4606--4612.

\bibitem{huang2015bank}
G.~Huang, K.~Zhou, N.~Trawny, and S.~I. Roumeliotis, ``A bank of maximum a
  posteriori (map) estimators for target tracking,'' \emph{IEEE Transactions on
  Robotics}, vol.~31, no.~1, pp. 85--103, 2015.

\bibitem{kantaros2017temporal}
\BIBentryALTinterwordspacing
Y.~Kantaros, M.~Guo, and M.~M. Zavlanos, ``Temporal logic task planning and
  intermittent connectivity control of mobile robot networks,'' \emph{IEEE
  Transactions on Automatic Control}, (accepted). [Online]. Available:
  \url{https://arxiv.org/abs/1706.00765}
\BIBentrySTDinterwordspacing

\bibitem{freundlich2018distributed}
C.~Freundlich, S.~Lee, and M.~M. Zavlanos, ``Distributed active state
  estimation with user-specified accuracy,'' \emph{IEEE Transactions on
  Automatic Control}, vol.~63, no.~2, pp. 418--433, 2018.

\bibitem{DKFSN2007O}
R.~Olfati-Saber, ``Distributed kalman filtering for sensor networks,'' in
  \emph{Proceedings of {IEEE} Conference on Decision and Control}.\hskip 1em
  plus 0.5em minus 0.4em\relax IEEE, 2007, pp. 5492--5498.

\bibitem{IWC2012KFR}
A.~T. Kamal, J.~A. Farrell, and A.~K. Roy-Chowdhury, ``Information weighted
  consensus,'' in \emph{Proceedings of {IEEE} Conference on Decision and
  Control}.\hskip 1em plus 0.5em minus 0.4em\relax IEEE, 2012, pp. 2732--2737.

\bibitem{DRSSIA2012JASR}
B.~J. Julian, M.~Angermann, M.~Schwager, and D.~Rus, ``Distributed robotic
  sensor networks: An information-theoretic approach,'' \emph{International
  Journal of Robotics Research}, vol.~31, no.~10, pp. 1134--1154, 2012.

\bibitem{DDFCASN2004MD}
A.~Makarenko and H.~Durrant-Whyte, ``Decentralized data fusion and control in
  active sensor networks,'' in \emph{International Conference on Information
  Fusion}, vol.~1, 2004, pp. 479--486.

\bibitem{DDF2016CA}
M.~E. Campbell and N.~R. Ahmed, ``Distributed data fusion: Neighbors, rumors,
  and the art of collective knowledge,'' \emph{IEEE Control Systems}, vol.~36,
  no.~4, pp. 83--109, 2016.

\bibitem{DADDSN2001ND}
E.~W. Nettleton and H.~F. Durrant-Whyte, ``Delayed and asequent data in
  decentralized sensing networks,'' in \emph{Intelligent Systems and Advanced
  Manufacturing}.\hskip 1em plus 0.5em minus 0.4em\relax International Society
  for Optics and Photonics, 2001, pp. 1--9.

\bibitem{KFIO2004SSFPJS}
B.~Sinopoli, L.~Schenato, M.~Franceschetti, K.~Poolla, M.~I. Jordan, and S.~S.
  Sastry, ``Kalman filtering with intermittent observations,'' \emph{{IEEE}
  Transactions on Automatic Control}, vol.~49, no.~9, pp. 1453--1464, 2004.

\bibitem{SDCMRIGT2006MD}
G.~M. Mathews and H.~F. Durrant-Whyte, ``Scalable decentralised control for
  multi-platform reconnaissance and information gathering tasks,'' in
  \emph{International Conference on Information Fusion}.\hskip 1em plus 0.5em
  minus 0.4em\relax IEEE, 2006, pp. 1--8.

\bibitem{lan2016rapidly}
X.~Lan and M.~Schwager, ``Rapidly exploring random cycles: Persistent
  estimation of spatiotemporal fields with multiple sensing robots,''
  \emph{IEEE Transactions on Robotics}, vol.~32, no.~5, pp. 1230--1244, 2016.

\bibitem{hollinger2013sampling}
G.~A. Hollinger and G.~S. Sukhatme, ``Sampling-based motion planning for
  robotic information gathering.'' in \emph{Robotics: Science and
  Systems}.\hskip 1em plus 0.5em minus 0.4em\relax Citeseer, 2013, pp. 72--983.

\bibitem{kuffner2000rrt}
J.~J. Kuffner and S.~M. LaValle, ``Rrt-connect: An efficient approach to
  single-query path planning,'' in \emph{IEEE International Conference on
  Robotics and Automation}, San Fransisco, CA, April 2000, pp. 995--1001.

\bibitem{karaman2011sampling}
S.~Karaman and E.~Frazzoli, ``Sampling-based algorithms for optimal motion
  planning,'' \emph{The International Journal of Robotics Research}, vol.~30,
  no.~7, pp. 846--894, 2011.

\bibitem{hollinger2014sampling}
G.~A. Hollinger and G.~S. Sukhatme, ``Sampling-based robotic information
  gathering algorithms,'' \emph{The International Journal of Robotics
  Research}, vol.~33, no.~9, pp. 1271--1287, 2014.

\bibitem{kim2006maximizing}
Y.~Kim and M.~Mesbahi, ``On maximizing the second smallest eigenvalue of a
  state-dependent graph laplacian,'' \emph{IEEE Transactions on Automatic
  Control}, vol.~51, no.~1, pp. 116--120, 2006.

\bibitem{zavlanos2007potential}
M.~M. Zavlanos and G.~J. Pappas, ``Potential fields for maintaining
  connectivity of mobile networks,'' \emph{IEEE Transactions on Robotics,},
  vol.~23, no.~4, pp. 812--816, 2007.

\bibitem{ji2007distributed}
M.~Ji and M.~B. Egerstedt, ``Distributed coordination control of multi-agent
  systems while preserving connectedness.'' \emph{IEEE Transactions on
  Robotics}, vol.~23, no.~4, pp. 693--703, August 2007.

\bibitem{Zavlanos_IEEETRO08}
M.~Zavlanos and G.~Pappas, ``Distributed connectivity control of mobile
  networks,'' \emph{IEEE Transactions on Robotics}, vol.~24, no.~6, pp.
  1416--1428, 2008.

\bibitem{yang2010decentralized}
P.~Yang, R.~A. Freeman, G.~J. Gordon, K.~M. Lynch, S.~S. Srinivasa, and
  R.~Sukthankar, ``Decentralized estimation and control of graph connectivity
  for mobile sensor networks,'' \emph{Automatica}, vol.~46, no.~2, pp.
  390--396, 2010.

\bibitem{montijano2011adaptive}
E.~Montijano, J.~Montijano, and C.~Sagues, ``Adaptive consensus and algebraic
  connectivity estimation in sensor networks with chebyshev polynomials,'' in
  \emph{50th IEEE Conference on Decision and Control and European Control
  Conference}, Orlando, FL, USA, December 2011, pp. 4296--4301.

\bibitem{franceschelli2013decentralized}
M.~Franceschelli, A.~Gasparri, A.~Giua, and C.~Seatzu, ``Decentralized
  estimation of laplacian eigenvalues in multi-agent systems,''
  \emph{Automatica}, vol.~49, no.~4, pp. 1031--1036, 2013.

\bibitem{sabattini2013decentralized}
L.~Sabattini, N.~Chopra, and C.~Secchi, ``Decentralized connectivity
  maintenance for cooperative control of mobile robotic systems,'' \emph{The
  International Journal of Robotics Research}, vol.~32, no.~12, pp. 1411--1423,
  2013.

\bibitem{Zavlanos_IEEE11}
M.~Zavlanos, M.~Egerstedt, and G.~Pappas, ``Graph theoretic connectivity
  control of mobile robot networks,'' \emph{Proceedings of the IEEE: Special
  Issue on Swarming in Natural and Engineered Systems}, vol.~99, no.~9, pp.
  1525--1540, 2011.

\bibitem{FLSPPDF2016CF}
A.~J. Carfang and E.~W. Frew, ``Fast link scheduling policies for persistent
  data ferrying,'' \emph{Journal of Aerospace Information Systems}, pp.
  433--449, 2016.

\bibitem{hollinger2010multi}
G.~Hollinger and S.~Singh, ``Multi-robot coordination with periodic
  connectivity,'' in \emph{IEEE International Conference on Robotics and
  Automation}, Anchorage, Alaska, May 2010, pp. 4457--4462.

\bibitem{zavlanos2010synchronous}
M.~M. Zavlanos, ``Synchronous rendezvous of very-low-range wireless agents,''
  in \emph{49th IEEE Conference on Decision and Control}, Atlanta, GA, USA,
  December 2010, pp. 4740--4745.

\bibitem{kantaros2016distributedInterm}
Y.~Kantaros and M.~M. Zavlanos, ``Distributed intermittent connectivity control
  of mobile robot networks,'' \emph{Transactions on Automatic Control},
  vol.~62, no.~7, pp. 3109--3121, July 2017.

\bibitem{kantaros2016simultaneous}
------, ``Simultaneous intermittent communication control and path optimization
  in networks of mobile robots,'' in \emph{55th IEEE Conference on Decision and
  Control}, Las Vegas, NV, December 2016, pp. 1794--1795.

\bibitem{kantaros2018distributed}
Y.~Kantaros and M.~Zavlanos, ``Distributed intermittent communication control
  of mobile robot networks under time-critical dynamic tasks,'' in \emph{IEEE
  International Conference on Robotics and Automation}, Brisbane, Australia,
  May 2018, pp. 5028--5033.

\bibitem{guo2018gathering}
M.~Guo and M.~M. Zavlanos, ``Multi-robot data gathering under buffer
  constraints and intermittent communication,'' \emph{IEEE Transactions on
  Robotics}, no.~4, pp. 1082--1097, 2018.

\bibitem{meJ1}
R.~Khodayi-mehr, W.~Aquino, and M.~M. Zavlanos, ``Model-based active source
  identification in complex environments,'' \emph{{IEEE} Transactions on
  Robotics}, (accepted). [Online]. Available:
  https://arxiv.org/pdf/1706.01603.pdf, 2018.

\bibitem{leahy2015distributed}
K.~Leahy, A.~Jones, M.~Schwager, and C.~Belta, ``Distributed information
  gathering policies under temporal logic constraints,'' in \emph{IEEE 54th
  Conference on Decision and Control}, Osaka, Japan, 2015, pp. 6803--6808.

\bibitem{Godsil_SPRINGER01}
C.~Godsil and G.~Royle, \emph{Algebraic Graph Theory}.\hskip 1em plus 0.5em
  minus 0.4em\relax New York: Springer-Verlag, 2001.

\bibitem{karaman2010optimal}
S.~Karaman and E.~Frazzoli, ``Optimal kinodynamic motion planning using
  incremental sampling-based methods,'' in \emph{49th IEEE Conference on
  Decision and Control}, Atlanta, GA, December 2010, pp. 7681--7687.

\bibitem{DFDSN2001DS}
H.~Durrant-Whyte and M.~Stevens, ``Data fusion in decentralised sensing
  networks,'' in \emph{International Conference on Information Fusion}, 2001,
  pp. 302--307.

\bibitem{CSKFNSE2015LKGD}
J.~Y. Li, A.~Kokkinaki, H.~Ghorbanidehno, E.~F. Darve, and P.~K. Kitanidis,
  ``The compressed state kalman filter for nonlinear state estimation:
  Application to large-scale reservoir monitoring,'' \emph{Water Resources
  Research}, vol.~51, no.~12, pp. 9942--9963, 2015.

\bibitem{RTDALSS2015GKLD}
H.~Ghorbanidehno, A.~Kokkinaki, J.~Y. Li, E.~Darve, and P.~K. Kitanidis,
  ``Real-time data assimilation for large-scale systems: The spectral kalman
  filter,'' \emph{Advances in water resources}, vol.~86, pp. 260--272, 2015.

\bibitem{DDFMN2005MC}
T.~W. Martin and K.-C. Chang, ``A distributed data fusion approach for mobile
  ad hoc networks,'' in \emph{International Conference on Information Fusion},
  vol.~2.\hskip 1em plus 0.5em minus 0.4em\relax IEEE, 2005, pp. 8--pp.

\bibitem{meC1}
R.~Khodayi-mehr, W.~Aquino, and M.~M. Zavlanos, ``Model-based sparse source
  identification,'' in \emph{Proceedings of American Control Conference}, July
  2015, pp. 1818--1823.

\bibitem{meC4}
------, ``Distributed reduced order source identification,'' in
  \emph{Proceedings of American Control Conference}, June 2018, pp. 1084--1089.

\bibitem{muraca2009state}
P.~Muraca and P.~Pugliese, ``State estimation for distributed parameter systems
  under intermittent observations,'' in \emph{IEEE International Conference on
  Control and Automation}, Christchurch, New Zealand, 2009, pp. 926--930.

\bibitem{DSEvideo}
``Simulation video - \url{https://vimeo.com/262568673}.''

\bibitem{VisiLibity:08}
K.~J. Obermeyer and Contributors, ``The {VisiLibity} library,''
  \texttt{http://www.VisiLibity.org}, 2008, release 1.

\end{thebibliography}
\end{document}